\documentclass{article}

\usepackage{arxiv}

\usepackage[utf8]{inputenc}
\usepackage[T1]{fontenc}

\usepackage{amsmath,amsfonts,bm}

\def\eqref#1{equation~\ref{#1}}

\def\1{\bm{1}}

\DeclareMathAlphabet{\mathsfit}{\encodingdefault}{\sfdefault}{m}{sl}
\SetMathAlphabet{\mathsfit}{bold}{\encodingdefault}{\sfdefault}{bx}{n}

\usepackage{amssymb}
\usepackage{amsthm}
\usepackage{booktabs}
\usepackage{graphicx}
\usepackage{float}
\usepackage{microtype}
\usepackage{placeins}
\usepackage[round,authoryear]{natbib}
\usepackage{xcolor}
\usepackage{url}
\usepackage{hyperref}

\title{Beyond Quadratic Loss: The Stability Phase Diagram of Adam}
\author{
  Gaoxiang Tang \\
  IIIS, Tsinghua University \\
  \And
  Huanran Chen \\
  College AI, Tsinghua University\\
  \And
  Ziming Liu \\
  College AI, Tsinghua University \\
  Shanghai Qizhi Institute \\
  MetaCircle \\
}
\date{}

\renewcommand{\headeright}{Preprint}
\renewcommand{\undertitle}{Preprint}
\renewcommand{\shorttitle}{Beyond Quadratic Loss: The Stability Phase Diagram of Adam}

\hypersetup{
  colorlinks=true,
  linkcolor=black,
  citecolor=blue!45!black,
  urlcolor=blue!55!black,
  pdftitle={Beyond Quadratic Loss: The Stability Phase Diagram of Adam},
  pdfauthor={Gaoxiang Tang and Huanran Chen and Ziming Liu},
  pdfsubject={Machine Learning and Optimization},
}

\newtheorem{theorem}{Theorem}
\newtheorem{lemma}[theorem]{Lemma}
\newtheorem{corollary}[theorem]{Corollary}

\theoremstyle{definition}
\newtheorem{definition}[theorem]{Definition}
\theoremstyle{plain}

\newenvironment{experimentsetup}
  {\par\begingroup\setlength{\parskip}{0pt}}
  {\par\endgroup}

\begin{document}

\maketitle

\begin{abstract}
Loss spikes are recurrent instabilities in neural-network training and can arise from multiple mechanisms. For Adam in particular, macroscopic loss spikes have been linked to optimizer dynamics, yet how its two momentum timescales govern them remains unclear. We investigate this dependence by mapping training dynamics across the \((\beta_1,\beta_2)\) plane. Across a range of model--task settings, an approximately linear boundary, \(1-\beta_2=C(1-\beta_1)\), separates spiky from non-spiky dynamics, whereas a one-dimensional quadratic loss produces approximately cubic slope. A one-dimensional superquadratic loss \(L(x)\propto|x|^n\) recovers the near-linear scaling and links the boundary coefficient to the effective loss exponent \(n\). We further show that confident cross-entropy losses develop a core--wall landscape comprising a narrow quadratic core followed by a steep wall, which produces effective superquadratic behavior at the scale of an optimizer update. Together, these results connect Adam loss spikes to both the mismatch between momentum timescales and finite-scale superquadratic loss geometry beyond the Hessian.
\end{abstract}

\section{Introduction}\label{introduction}

\begin{figure}[H]
\centering
\includegraphics[width=\linewidth,keepaspectratio]{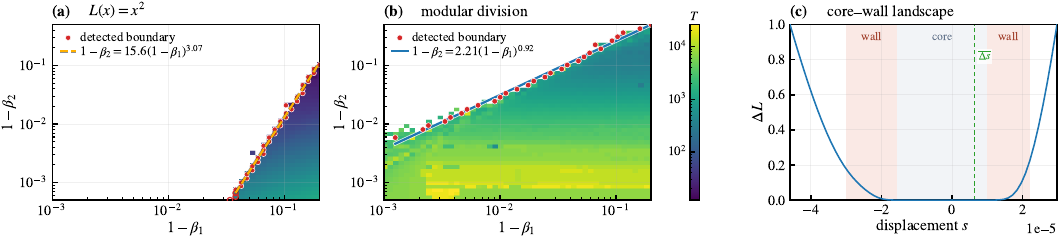}
\caption{Overview of the empirical boundary scaling and finite-scale loss geometry. (a) Adam stability phase diagram in the \((\beta_1,\beta_2)\) plane for \(L(x)=x^2/2\). A power-law fit gives \(1-\beta_2\simeq15.6(1-\beta_1)^{3.07}\), showing the approximately cubic scaling of the quadratic model. (b) Phase diagram for the modular-division Transformer, whose detected boundary follows the near-linear relation \(1-\beta_2\simeq2.21(1-\beta_1)^{0.92}\). In (a,b), color denotes the measured oscillation period. (c) Directional loss slice \(L(s)=L(\theta+s\hat d_t)\) at a pre-spike point of the modular-division Transformer, where $\hat d_t$ is the Adam-preconditioned gradient direction. The green dashed line shows the magnitude of a typical update, which is comparable to the flat-core width. }
\label{fig:overview}
\end{figure}

Loss spikes are abrupt, macroscopic loss excursions that recur across neural-network training settings. Their causes depend on the training regime. In Adam training, such spikes have been widely reported \cite{chowdhery2023palm,molybog2023adam,thilak2022slingshot}. Under a local quadratic approximation, gradient descent is stable when \(\lambda_{\max}(H_t)<2/\eta\). Crossing this threshold can induce Edge-of-Stability dynamics characterized by non-monotonic loss excursions and spikes \cite{xing2018walk,jastrzebski2020breakeven,cohen2021gradient}. For adaptive optimizers, the analogous criterion replaces the raw Hessian with a preconditioned Hessian, leading to the Adaptive Edge of Stability for Adam and RMSProp \cite{cohen2023adaptive,cohen2025centralflows}. Sustained violations of this effective stability threshold have also been linked to macroscopic Adam loss spikes \cite{bai2026adaptive}. Other mechanisms apply in more specific settings. Numerical Feature Inflation explains spikes in low-precision models trained with cross-entropy loss \cite{liu2026grokking}, while weight-norm criticality applies to models with scale-invariant layers \cite{li2026weightnorm}.

Adam's first and second gradient moments have memory timescales set by \(\beta_1\) and \(\beta_2\), respectively \cite{kingma2014adam}.
\textbf{We investigate how the mismatch between \(\beta_1\) and \(\beta_2\) governs loss spikes.}
We focus on macroscopic loss spikes triggered by violations of Adam's EoS condition, rather than the microscopic oscillations associated with short-period attractors \cite{bock2019nonconvergence,fong2026provable}. 
Across diverse models and tasks, we identify the near-linear stability boundary \(1-\beta_2\propto(1-\beta_1)\) shown in Figure~\ref{fig:overview}(b). 
At fixed \(\beta_1\), the spiky phase lies on the larger-\(\beta_2\) side of this boundary.
To isolate the underlying dynamics, we study Adam on the one-dimensional quadratic loss \(L(x)=kx^2/2\). This model instead produces the approximately cubic boundary \(1-\beta_2\propto(1-\beta_1)^3\) in Figure~\ref{fig:overview}(a). 
\citet{bai2026degenerate} analyze the \((\beta_1,\beta_2)\) phase transition on even-degree degenerate polynomials, obtaining a linear boundary but a phase ordering different from that observed in real models.
These discrepancies motivate an analysis beyond the conventional infinitesimal quadratic approximation.

We then analyze the Adam stability phases for a superquadratic one-dimensional loss \(L(x)=|x|^n\) with \(n>2\). We find a wedge-shaped spiky phase bounded approximately by
\begin{equation}
C_L(1-\beta_1)\geq 1-\beta_2\geq C_R(1-\beta_1).
\end{equation}
The right boundary follows from the competition between the effective learning rate \(\eta_{\rm eff}\) and the critical learning rate \(\eta_{\rm crit}\), giving \(C_R=2(n-2)/n\). The left boundary involves more complex oscillatory dynamics and is described empirically by \(C_L\simeq12(n-2)/(3n-4)\).
The right boundary is consistent with \citet{bai2026degenerate}, but does not appear in real models because the second derivative at the minimum is nonzero. The boundary we observe in real models instead corresponds to the left boundary.

Local loss landscapes are often analyzed in the infinitesimal limit through a second-order Taylor expansion. In neural networks, the corresponding Hessian spectra typically contain a broad near-zero bulk and a few isolated outliers \cite{sagun2018hessian}. This structure has motivated river-valley descriptions of neural-network optimization \cite{xing2018walk,wen2024river}.

We instead analyze directional loss landscapes along the preconditioned gradient at finite scales and find a \textbf{core--wall landscape} composed of a flat quadratic core, a steep superquadratic wall, and an outer rollover, as illustrated in Figure~\ref{fig:overview}(c).
\textbf{We show that this structure arises naturally from confident cross-entropy and derive the scale of its quadratic core.} During training, the landscape undergoes finite-scale progressive sharpening, in which the preconditioned sharpness increases while the flat core contracts. Once a typical update becomes comparable to the core width, Adam probes the wall and encounters a large effective exponent \(n\). We estimate this exponent at pre-spike points and substitute it into the empirical relation for \(C_L\). The resulting estimates capture the scale of the observed boundary coefficients.

This work makes three contributions. First, we identify a near-linear phase boundary separating spiky and non-spiky Adam dynamics across diverse models and tasks. Second, we study Adam stability in the context of superquadratic losses and show how the memory timescales of the first and second moments set the phase boundaries. Third, we study the local loss landscape at finite scales, show that its core--wall structure arises naturally from confident cross-entropy, and relate its effective exponent at the optimizer's update scale to the empirical boundary coefficient.

\section{Stability Phase Boundaries Across Six Models}\label{sec:beta-boundaries-and-slopes-for-six-models}

We study six model--task settings: one-layer Transformers \cite{vaswani2017attention} for modular division and addition modulo 53, a character-level Transformer for next-character prediction on Tiny Shakespeare, an MLP and a VGG11-style CNN \cite{simonyan2015verydeep} for CIFAR-10 classification \cite{krizhevsky2009learning}, and a convolutional autoencoder for binarized MNIST reconstruction \cite{lecun1998gradient}. All models use AdamW \cite{loshchilov2019decoupled} with cross-entropy for classification and language modeling or binary cross-entropy for reconstruction. Within each setting, only \(\beta_1\) and \(\beta_2\) vary, while the architecture, data, random seed, and other optimizer hyperparameters remain fixed. Appendix~\ref{experimental-setup-for-the-six-models} gives the full configurations.

\begin{figure}[t]
\centering
\includegraphics[width=\linewidth,keepaspectratio]{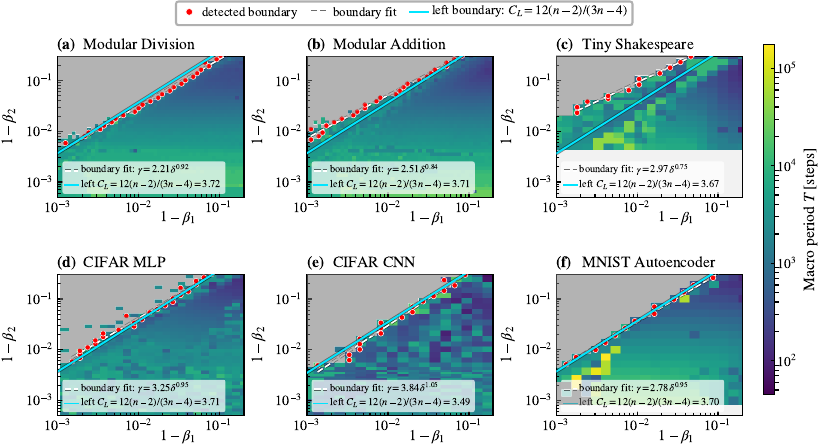}
\caption{Phase boundaries for six models. (a--f) Modular division, modular addition, Tiny Shakespeare, a CIFAR-10 MLP, a CIFAR-10 CNN, and an MNIST autoencoder, respectively. The axes are the memory gaps \(\delta=1-\beta_1\) and \(\gamma=1-\beta_2\), and color indicates the detected macroscopic oscillation period in optimizer steps. Red points mark boundary locations, white dashed curves show power-law fits \(\gamma=C\delta^p\), and solid cyan lines show estimates from the effective loss exponents in Table~\ref{tab:wall-exponents} and the empirical relation \(C_L=12(n-2)/(3n-4)\) in Section~\ref{sec:one-dimensional-toy-model-of-adam-stability}. Gray cells have no accepted period. Unsampled regions are uncolored.}
\label{fig:six-boundaries}
\end{figure}

We scan logarithmically spaced memory gaps \(\delta=1-\beta_1\) and \(\gamma=1-\beta_2\). All six diagrams cover \(\delta\in[10^{-3},0.2]\) and \(\gamma\in[5\times10^{-4},0.3]\). After the initial transient, we extend trajectories with unresolved periods and identify macroscopic oscillations from low-frequency spectral peaks that meet the prominence criterion in Appendix~\ref{a.3-period-detection-algorithm}.

For each fixed \(\gamma\), the boundary is the immediate right neighbor of the largest sampled \(\delta\) with no accepted period. We fit these points over the selected ranges using \(\log\gamma=\log C+p\log\delta\) to estimate \(C\) and \(p\). \textbf{All six phase diagrams approach the linear scaling \(\gamma\propto\delta\)}, with fitted exponents \(p=0.92,0.84,0.75,0.95,1.05,0.95\) for (a--f), respectively. The region above the boundary, where \((1-\beta_2)/(1-\beta_1)\) is larger, generally contains no detected macroscopic spikes. Within the spiky region, periods tend to increase as \(\beta_2\) approaches one and depend more weakly on \(\beta_1\).

\section{One-dimensional toy model of Adam stability}\label{sec:one-dimensional-toy-model-of-adam-stability}
To isolate the boundary mechanism, we first state the relevant local stability condition.
\begin{lemma}[Local stability condition for Adam]\label{lem:adam-local-stability}
Consider a network in a locally quadratic region with Hessian \(H_t\) and preconditioner \(D_t=\operatorname{diag}[(\sqrt{\hat v_t}+\epsilon)^{-1}]\). With both held fixed locally and late-time bias corrections neglected, Adam is unstable if the following condition holds \cite{cohen2023adaptive,bai2026adaptive}:
\begin{equation}
  \frac{1-\beta_1}{1+\beta_1}\lambda_{\max}(D_tH_t)>\frac{2}{\eta}.
\end{equation}
\end{lemma}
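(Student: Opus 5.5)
With \(H_t\) and \(D_t\) frozen and bias corrections dropped, Adam becomes heavy-ball momentum on a quadratic with preconditioner \(D\). I will reduce it to a linear recurrence and read off the instability threshold from the characteristic roots.

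\textbf{Setting up the recurrence.} Write the loss as \(L(\theta)=\tfrac12(\theta-\theta^*)^\top H(\theta-\theta^*)\) and set \(x_t=\theta_t-\theta^*\), so \(g_t=Hx_t\). Adam's first moment is \(m_t=\beta_1 m_{t-1}+(1-\beta_1)g_t\), and the update is \(x_{t+1}=x_t-\eta D m_t\). Eliminating \(m\) gives
\begin{equation*}
x_{t+1}-x_t=\beta_1(x_t-x_{t-1})-\eta(1-\beta_1)DHx_t .
\end{equation*}
This is Polyak heavy-ball with step \(\eta(1-\beta_1)\) and momentum \(\beta_1\), acting on the operator \(A=DH\).

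\textbf{Decoupling into modes.} Since \(D\) is diagonal and positive, \(A=D^{1/2}(D^{1/2}HD^{1/2})D^{-1/2}\) is similar to the symmetric matrix \(D^{1/2}HD^{1/2}\). Hence \(A\) has real eigenvalues, and \(\lambda_{\max}(DH)\) is well defined. In the eigenbasis of \(A\) the recurrence splits into scalar recurrences, one per eigenvalue \(\lambda\):
\begin{equation*}
u_{t+1}=\bigl(1+\beta_1-\eta(1-\beta_1)\lambda\bigr)u_t-\beta_1u_{t-1}.
\end{equation*}
Its characteristic polynomial is \(p(z)=z^2-(1+\beta_1-a)z+\beta_1\), where \(a=\eta(1-\beta_1)\lambda\).

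\textbf{Locating the instability.} The product of the roots is \(\beta_1<1\). For complex roots, both have modulus \(\sqrt{\beta_1}<1\), so such modes are stable. For real roots, the Jury criteria require \(p(1)=a>0\) and \(p(-1)=2+2\beta_1-a>0\).

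The binding condition for large \(\lambda\) is \(p(-1)>0\). When \(a>2(1+\beta_1)\), we have \(p(-1)<0\) while \(p(z)\to+\infty\) as \(z\to-\infty\). So \(p\) has a real root below \(-1\), and that mode grows geometrically with alternating sign. Taking the mode with \(\lambda=\lambda_{\max}(DH)\), the condition reads \(\eta(1-\beta_1)\lambda_{\max}>2(1+\beta_1)\), which is exactly the stated inequality. Because this one-sided implication is all the lemma asserts, the \(p(-1)<0\) argument suffices for it; checking the complex-root case only confirms that the threshold is sharp.

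\textbf{Expected obstacle.} The delicate point is justifying the modeling reduction itself rather than the algebra. In full Adam, \(D_t\) depends on \(\hat v_t\), which drifts with the iterates. One must argue that for \(\beta_2\) close to 1 the preconditioner changes slowly relative to the momentum dynamics, so treating it as frozen is legitimate. Likewise, late-time bias corrections approach 1. This is exactly what the hypotheses of the lemma grant, so I will state it as a quasi-static assumption and cite \cite{cohen2023adaptive,bai2026adaptive}. A minor additional point: \(\epsilon>0\) keeps \(D\) positive definite, which is needed for the similarity argument.
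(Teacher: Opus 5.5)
Your proposal is correct and follows essentially the paper's route. Like the paper, you eliminate the first moment to get a heavy-ball recurrence, use the similarity of $D_tH_t$ to $D_t^{1/2}H_tD_t^{1/2}$ to split it into real-eigenvalue modes, and read the threshold $\eta(1-\beta_1)\lambda<2(1+\beta_1)$ off the characteristic quadratic. The differences are cosmetic: you build $D$ into the recurrence from the start rather than substituting $H_t\to D_tH_t$ afterwards, and you make the instability explicit via the real root below $-1$ when $p(-1)<0$ rather than simply quoting the Jury condition.
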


Appendix~\ref{proof:adam-local-stability} gives the derivation. Its one-dimensional specialization follows immediately.

\begin{corollary}[Stability condition for Adam in 1D]\label{cor:adam-stability-1d}
For one-dimensional curvature \(\lambda_t=L''(x_t)>0\), define
\begin{equation}
  \eta_{\rm eff,t}=\frac{\eta}{\sqrt{\hat v_t}+\epsilon},
  \qquad
  \eta_{\rm crit,t}=\frac{2(1+\beta_1)}{(1-\beta_1)\lambda_t}.
\end{equation}
Under the same local approximation, Adam is unstable when \(\eta_{\rm eff,t}>\eta_{\rm crit,t}\).
\end{corollary}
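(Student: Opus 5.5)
The corollary follows by specializing Lemma~\ref{lem:adam-local-stability} to dimension one, where all matrices are scalars. In the scalar case the Hessian is \(H_t=\lambda_t=L''(x_t)\). The preconditioner is the scalar \(D_t=(\sqrt{\hat v_t}+\epsilon)^{-1}>0\). The preconditioned Hessian \(D_tH_t\) is therefore the single number \(\lambda_t/(\sqrt{\hat v_t}+\epsilon)\), and since \(\lambda_t>0\), this number is its own largest eigenvalue: \(\lambda_{\max}(D_tH_t)=\lambda_t/(\sqrt{\hat v_t}+\epsilon)\). We adopt exactly the same local approximation as in the lemma. That is, we freeze \(\lambda_t\) and \(\hat v_t\) over the window of interest and neglect the bias corrections.

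Substituting this into the instability condition of Lemma~\ref{lem:adam-local-stability} gives
\begin{equation*}
\frac{1-\beta_1}{1+\beta_1}\,\frac{\lambda_t}{\sqrt{\hat v_t}+\epsilon}>\frac{2}{\eta}.
\end{equation*}
Since \(\eta>0\) and \(\sqrt{\hat v_t}+\epsilon>0\), we may multiply both sides by \(\eta\). Since \(0\le\beta_1<1\) and \(\lambda_t>0\), we may divide both sides by the positive quantity \(\tfrac{(1-\beta_1)\lambda_t}{1+\beta_1}\). Neither operation changes the direction of the strict inequality. Both operations together turn it into
\begin{equation*}
\frac{\eta}{\sqrt{\hat v_t}+\epsilon}>\frac{2(1+\beta_1)}{(1-\beta_1)\lambda_t},
\end{equation*}
which is precisely \(\eta_{\rm eff,t}>\eta_{\rm crit,t}\) with the definitions in the statement.

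There is no substantive obstacle, since the argument is pure algebra once the lemma is granted. The only points needing care are the sign conditions that justify the rearrangement: \(\lambda_t>0\), which is assumed in the statement, \(\beta_1<1\), and \(\eta>0\). It is also worth noting that positive curvature is what makes \(\eta_{\rm crit,t}\) well defined and positive, so the restriction \(\lambda_t>0\) in the statement is essential.
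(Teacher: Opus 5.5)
Your proposal is correct and takes the same route as the paper. The paper only remarks that the one-dimensional specialization of Lemma~\ref{lem:adam-local-stability} ``follows immediately'', by taking the scalar Hessian \(H_t=\lambda_t\) and preconditioner \(D_t=(\sqrt{\hat v_t}+\epsilon)^{-1}\); your substitution and sign-checked rearrangement (using \(\eta>0\), \(\beta_1<1\), \(\lambda_t>0\)) write out exactly that step.
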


We first study the quadratic loss \(L(x)=kx^2/2\) using Adam with \(k=1\), learning rate \(\eta=0.1\), numerical stabilizer \(\epsilon=10^{-30}\), no weight decay, \(x_0=1\), and \(m_0=v_0=0\). Each beta pair is trained for 250,000 updates, of which the first 125,000 are discarded. We estimate the period from upward crossings of the local stability threshold and set \(T=125{,}000/N_{\rm cross}\) when at least three crossings are detected. Figure~\ref{fig:toy-boundaries}(a) shows the approximately cubic boundary \(\gamma\simeq15.6\delta^{3.07}\), consistent with Figure~\ref{fig:overview}(a). Repeated crossings occur mainly below this boundary. Appendix~\ref{effect-of-weight-decay-on-one-dimensional-toy-model} examines the effect of weight decay on 1-dimensional models.

The boundary remains approximately cubic across the tested \(k\) and \(\eta\), and also across \(\epsilon\) while \(\epsilon\ll\sqrt{\hat v_t}\). Appendix~\ref{sensitivity-of-adam-on-lxkx22} reports these controls, which support an explanation based on the relative memory timescales of the two moments. Because the quadratic exponent differs substantially from the near-unit neural-network exponents, we next consider \(L(x)=k|x|^n/n\). For \(1<n<2\), divergent curvature near zero produces rapid oscillations rather than separated macroscopic spikes. We therefore focus on the superquadratic case \(n>2\).

\begin{figure}[t]
\centering
\includegraphics[width=\linewidth,keepaspectratio]{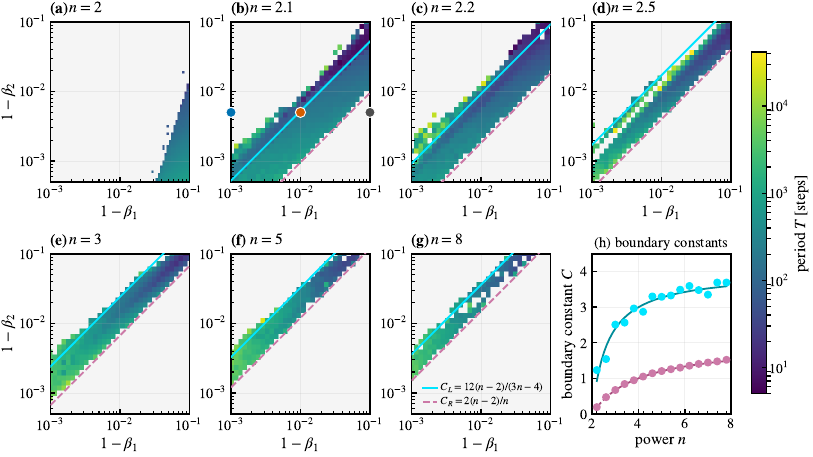}
\caption{Phase diagrams for the one-dimensional losses \(L(x)=k|x|^n/n\) with \(k=1\). (a--g) Results for \(n=2,2.1,2.2,2.5,3,5,8\), respectively, where \(\delta=1-\beta_1\), \(\gamma=1-\beta_2\), and color indicates the period. Solid cyan lines show the empirical left boundary \(\gamma=12(n-2)\delta/(3n-4)\), and dashed pink lines show the predicted right boundary \(\gamma=2(n-2)\delta/n\). Both coefficients vanish at \(n=2\), where the boundary is approximately cubic. The blue, orange, and gray points in (b) identify the trajectories in Figure~\ref{fig:eta-competition}. (h) Measured boundary coefficients as functions of \(n\).}
\label{fig:toy-boundaries}
\end{figure}

\textbf{For \(n>2\), Figure~\ref{fig:toy-boundaries}(b--g) shows a wedge-shaped spiky region between two lines of approximately unit log--log slope,}
\begin{equation}
C_L(1-\beta_1)\geq1-\beta_2\geq C_R(1-\beta_1).
\end{equation}
We derive the right boundary from Adam's local stability condition and characterize the left boundary empirically. The boundaries observed in the six neural-network phase diagrams correspond to the left boundary \(C_L\). The right boundary \(C_R\) requires the loss to remain superquadratic asymptotically as the displacement approaches zero. As Section~\ref{sec:core-wall-landscape-and-its-exponential-sum-mechanism} shows, the measured neural-network landscapes instead enter a quadratic core in this limit, so the right-boundary mechanism does not apply to them.

During a quiet interval, \((1-\beta_2)g_{t+1}^2\ll\beta_2v_t\) makes self-decay dominate the second moment, giving \(v_t\propto\beta_2^t\) \citep{liu2026optimization-4}. Lemma~\ref{lem:second-moment-self-decay} makes this decay asymptotically exact for bounded rescaled trajectories. While \(\sqrt{\hat v_t}\gg\epsilon\), this decay raises \(\eta_{\rm eff,t}\). The approach to zero on a superquadratic loss simultaneously lowers the curvature and raises \(\eta_{\rm crit,t}\). Figure~\ref{fig:eta-competition}(d--f) illustrates three resulting regimes. If \(\eta_{\rm crit,t}\) grows faster, \(\eta_{\rm eff,t}\) cannot catch it and sustained macroscopic spikes are suppressed, as in panel (f). If \(\eta_{\rm eff,t}\) repeatedly reaches the threshold, each crossing triggers a spike whose large gradients replenish \(v_t\) and lower \(\eta_{\rm eff,t}\), as in panel (e). Alternatively, the trajectory can be captured by a short-period microscopic attractor, where \(\eta_{\rm eff,t}\) saturates below \(\eta_{\rm crit,t}\) and rapid oscillations persist without macroscopic spikes, as in panel (d). Theorem~\ref{thm:right-boundary} formalizes the first regime for trajectories attracted to the rescaled fixed points. We absorb \(1/n\) into \(k\) and write \(L(x)=k|x|^n\), which leaves the boundary coefficients unchanged.

\begin{theorem}[Right boundary for \(L(x)=k|x|^n\) models]\label{thm:right-boundary}
Consider Adam applied to the loss landscape \(L(x)=k|x|^n\), where \(\frac12<\beta_1<1\), \(0<\beta_2<1\), and \(n>2\). In the late-training regime, neglect \(\epsilon\) and the bias-correction factors, then trajectories eventually cease to exhibit EoS-triggered spikes when
\begin{equation}
  \beta_1\beta_2^{-n/[2(n-2)]}<1.
\end{equation}
\end{theorem}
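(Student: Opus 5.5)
The plan is to pass to self-similar (rescaled) variables in which the late-time quiet dynamics become autonomous. Then I compare the growth rates of $\eta_{\rm eff,t}$ and $\eta_{\rm crit,t}$ from Corollary~\ref{cor:adam-stability-1d} along the attracting rescaled orbit. With $\epsilon$ and the bias corrections dropped, the iteration is
\[
m_{t+1}=\beta_1 m_t+(1-\beta_1)g_t,\qquad v_{t+1}=\beta_2 v_t+(1-\beta_2)g_t^2,\qquad x_{t+1}=x_t-\eta\,\frac{m_{t+1}}{\sqrt{v_{t+1}}},
\]
with $g_t=kn|x_t|^{n-1}\operatorname{sign}x_t$. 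By Lemma~\ref{lem:second-moment-self-decay}, on a quiet interval $v_t\simeq v_0\beta_2^{t}$, so $\eta_{\rm eff,t}\propto\beta_2^{-t/2}$. For this loss $\lambda_t=kn(n-1)|x_t|^{n-2}$, so $\eta_{\rm crit,t}\propto|x_t|^{-(n-2)}$.

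\textbf{Step 1: scaling ansatz.} I will look for geometric decay $x_t\sim r^t$. Requiring the step $\eta m_{t+1}/\sqrt{v_{t+1}}$ to scale like $x_t$, with $m_t\sim g_t\sim r^{(n-1)t}$ and $\sqrt{v_t}\sim\beta_2^{t/2}$, fixes
\[
r^{\,n-2}=\beta_2^{1/2},\qquad\text{i.e.}\qquad r=\beta_2^{1/[2(n-2)]}.
\]
Two consistency checks follow. First, $g_t^2/v_t\sim r^{2t}\to0$, so the self-decay hypothesis of Lemma~\ref{lem:second-moment-self-decay} is self-consistent. Second, $\eta_{\rm eff,t}/\eta_{\rm crit,t}\propto\beta_2^{-t/2}|x_t|^{n-2}$ is asymptotically \emph{constant} on such an orbit. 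The fate of the trajectory therefore reduces to the rescaled dynamics.

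\textbf{Step 2: rescaled map and its fixed point.} I will set
\[
u_t=\frac{x_t}{r^t},\qquad w_t=\frac{m_t}{r^{(n-1)t}},\qquad z_t=\frac{v_t}{\beta_2^{t}}.
\]
This gives an autonomous map; $z_t$ converges to a constant by Lemma~\ref{lem:second-moment-self-decay}. I will solve for the nonzero fixed point $(u^*,w^*)$. The momentum equation gives $w^*=(1-\beta_1)knu^{*\,n-1}/(r^{n-1}-\beta_1)$, and the position equation gives $\eta w^*/\sqrt{z^*}=(1-r)u^*$. I will then linearize and require the Jacobian spectrum to lie inside the unit disk.

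The positivity and attractivity of this fixed point is where the $\beta_1$-dependence enters: the momentum must forget its past faster than the gradient shrinks. I expect the stability condition to reduce to $\beta_1<r^{n}=\beta_2^{n/[2(n-2)]}$, which is exactly the stated inequality. I must carefully track whether the relevant power is $r^{n-1}$ (from the $w$-equation alone) or $r^{n}$ (after coupling with the $1/r$ factor in the $u$-equation). Pinning down this exponent via the full $2\times2$ Jacobian is the main obstacle. If the direct linearization is messy, I will instead first eliminate $w$ to obtain a second-order recursion in $u$, and then apply the Jury (Schur) criterion.

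\textbf{Step 3: conclusion.} On the attracting rescaled fixed point, $\eta_{\rm eff,t}/\eta_{\rm crit,t}$ converges to a constant. This constant is $<1$, because the fixed point is a non-oscillatory monotone approach: $(1-r)u^*>0$ with $u^*$ of constant sign, which forces the linearized local dynamics at the current curvature to be stable. Hence there are no further crossings of the threshold in Corollary~\ref{cor:adam-stability-1d}, and no EoS-triggered spikes.

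Conversely, I will note why the condition is sharp. If $\beta_1$ exceeds the threshold, momentum self-decay ($\propto\beta_1^t$) dominates, and $x_t$ decays more slowly than $r^t$. Then $\eta_{\rm crit,t}$ grows more slowly than $\beta_2^{-t/2}$ and is eventually overtaken by $\eta_{\rm eff,t}$, which restarts the spike cycle. This is the mechanism of regime (e) in Figure~\ref{fig:eta-competition} versus regime (f).
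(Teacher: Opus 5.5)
Your route is the paper's: rescale by \(r=\beta_2^{1/[2(n-2)]}\), invoke Lemma~\ref{lem:second-moment-self-decay}, find the nonzero rescaled fixed point, linearize and apply Jury, then show that the stability ratio \(S_*=\eta_{\rm eff}/\eta_{\rm crit}\) there is below one. Your \(r^{n-1}\)-versus-\(r^n\) worry resolves cleanly. With \(z\) frozen, the Jacobian of your \((u,w)\) map at the fixed point has determinant exactly \(b:=\beta_1/r^n\), because the cross terms cancel. Its trace is \((n-1)(1+b)-(n-2)a\), with \(a=1/r+br\). The condition \(\beta_1<r^{n-1}\), i.e.\ \(br<1\), is only the existence condition for \(u^*\neq0\). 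Your converse heuristic conflates the two: for \(r^n<\beta_1<r^{n-1}\) the fixed point exists and loses stability through a complex pair crossing the unit circle, not through momentum self-decay dominating.

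The first gap is that the determinant gives only necessity of \(b<1\). Sufficiency also needs \(0<(n-2)(a-1-b)<2(1+b)\), and the upper inequality is \emph{not} implied by \(b<1\). For \(n=3\), \(r=0.15\), \(b=0.9\) (so \(\beta_1\approx3\times10^{-3}\), \(\beta_2=0.0225\)), the characteristic polynomial is \(\lambda^2+3.00\lambda+0.9\), with a root near \(-2.66\). This is where the hypothesis \(\beta_1>\tfrac12\) enters, and your proposal never uses it. From \(b<1\) you get \(r^n>\beta_1>\tfrac12\), hence \(r>1/\sqrt2\) and \(1-r<\ln2/n\). Also \(br=\beta_1/r^{n-1}>\beta_1>\tfrac12\). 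Therefore \((n-2)(a-1-b)=(n-2)(1-r)(1-br)/r<\frac{n-2}{n}\frac{\ln2}{\sqrt2}<2(1+b)\).

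The second gap is Step~3. A monotone geometric approach to a stable rescaled fixed point does not force the frozen-curvature criterion of Corollary~\ref{cor:adam-stability-1d}. That criterion freezes \(v_t\) and \(\lambda_t\), and their decay is exactly what the rescaling absorbs. For \(n=10\), \(r=0.515\), \(b=0.99\) (\(\beta_1\approx1.3\times10^{-3}\)), the fixed point is Jury-stable, with complex roots of modulus \(\sqrt{0.99}\), yet \(S_*\approx1.07\). You need the explicit value from the fixed-point equations, \(S_*=(n-1)(1-r)(1-br)/[2(1+\beta_1)]\). Then \(r>\beta_1^{1/(n-1)}\), \(br>\beta_1\), and again \(\beta_1>\tfrac12\) give \(S_*<(-\ln\beta_1)(1-\beta_1)/[2(1+\beta_1)]<\ln2/6\). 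Both numerical examples lie outside \(\beta_1>\tfrac12\), so they do not contradict the theorem; they show that this hypothesis is what makes the argument work.

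Finally, your rescaled system is not autonomous. The \(z\)-update carries an explicit \(r^{2t}\) factor, and \(z_\infty\), and hence the fixed point itself, depends on the trajectory. You still have to transfer stability from the limiting map to the exact one, as the paper does. The coefficients converge at rate \(O(r^{2t})\), so a local contraction with constant \(\rho<1\), plus a summable perturbation, gives convergence of the actual orbit to the fixed point.
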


\begin{proof}[Proof sketch]
An equivalent condition was derived by \citet{bai2026degenerate} from the local stability of the full normalized Adam dynamics on even-degree polynomials. Lemma~\ref{lem:second-moment-self-decay} shows that the full update is a summable perturbation of the limiting recurrence below for every real \(n>2\).
\begin{equation}
y_{t+1}=a y_t-b y_{t-1}-c|y_t|^{n-2}y_t,
\qquad
b=\beta_1\beta_2^{-n/[2(n-2)]},
\end{equation}
where \(a\), \(b\), and \(c\) are time independent. Whenever the nonzero fixed points exist within the theorem's parameter range, both satisfy \(S_*<1\). Linearization and the Jury conditions show that these fixed points are stable exactly when \(b<1\), so attracted trajectories eventually cease to spike. The boundary \(b=1\) gives \(\beta_2=\beta_1^{2(n-2)/n}\). Appendix~\ref{proof:right-boundary} provides the fixed-point bounds and full stability calculation.
\end{proof}

Along these trajectories, \(x_t\sim q^t y_*\) gives \(g_t^2/v_t\propto q^{2t}\to0\), consistent with the self-decay approximation. In the phase diagram coordinates, the right boundary is
\begin{equation}
\gamma_R(\delta,n)=1-(1-\delta)^{2(n-2)/n}=\frac{2(n-2)}{n}\delta+O(\delta^2).
\end{equation}
The plotted line uses the leading small-\(\delta\) coefficient \(C_R=2(n-2)/n\). The \(n=2\) boundary is not linear and instead exhibits the approximately cubic scaling in Figure~\ref{fig:toy-boundaries}(a).

\begin{figure}[t]
\centering
\includegraphics[width=\linewidth,keepaspectratio]{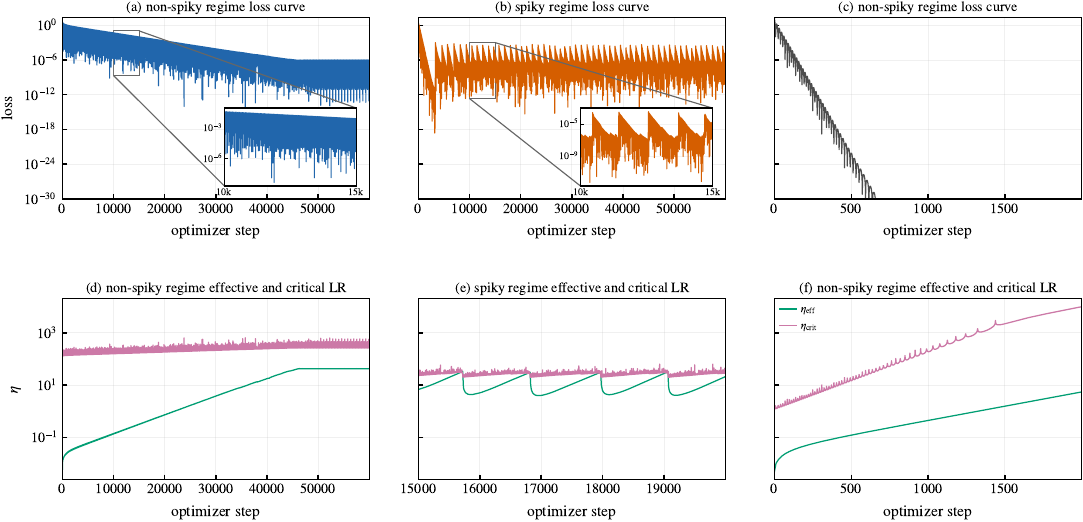}
\caption{Loss and learning-rate dynamics for \(n=2.1\). The columns correspond to \((\beta_1,\beta_2)=(0.999,0.995)\), \((0.99,0.995)\), and \((0.9,0.995)\), matching the blue, orange, and gray points in Figure~\ref{fig:toy-boundaries}(b). (a--c) Loss curves. (d--f) Effective learning rate \(\eta_{\rm eff}\) in green and critical rate \(\eta_{\rm crit}\) in pink. In (d), rapid oscillations persist while \(\eta_{\rm eff}\) remains below \(\eta_{\rm crit}\) and eventually saturates. In (e), \(\eta_{\rm eff}\) repeatedly reaches the threshold, producing spikes that reduce it. In (f), faster growth of \(\eta_{\rm crit}\) suppresses sustained macroscopic spikes.}
\label{fig:eta-competition}
\end{figure}
These regimes show that the left boundary involves more complex oscillatory dynamics and finite-step effects, and a complete analytical explanation remains open. We summarize its dependence on \(n\) by the empirical relation \(C_L\simeq12(n-2)/(3n-4)\), which captures the trend in Figure~\ref{fig:toy-boundaries}(h).

\section{Core--wall landscape and its exponential-sum mechanism}\label{sec:core-wall-landscape-and-its-exponential-sum-mechanism}

The one-dimensional results suggest that the near-linear \(\beta\)-boundary reflects a superquadratic loss profile at the scale explored by Adam. Although the conventional quadratic approximation remains valid sufficiently close to a low-loss minimum, its domain of validity can be much smaller than an optimizer update \cite{ma2022beyond}. The optimizer then probes the steep finite-scale growth outside this neighborhood, motivating our \textbf{core--wall landscape} description. It refines the usual river--valley picture by resolving the basin around a low-loss minimum into an inner quadratic core, a superquadratic wall, and an outer rollover. Related notions of flatness have been connected to generalization \cite{hochreiter1997flat,keskar2017largebatch}, while basin-like parameter regions have been used to study behavioral preservation and fine-tuning robustness \cite{peng2024safety,chen2026basin}. \textbf{We quantify this directional geometry using an effective logarithmic exponent}. 
Experimentally, we observe the core--wall landscape at pre-spike points in all six model--task settings.  Appendix~\ref{a.4-locating-the-pre-spike-points} details the pre-spike selection procedure and illustrates how contraction of the quadratic core toward the optimizer's update scale produces progressive sharpening at finite scales.

\begin{definition}[Core--wall landscape]\label{def:core-wall-landscape}
For a frozen direction \(\hat d\), define the directional slice \(\phi(s)=L(\theta+s\hat d)\) and let \(s_*\) be one of its local minima. For \(r>0\), define the two excess-loss branches and their local exponents by
\begin{equation}
\Delta L_\pm(r)=\phi(s_*\pm r)-\phi(s_*),
\qquad
n_{{\rm eff},\pm}(r)=\frac{\mathrm d\log\Delta L_\pm(r)}{\mathrm d\log r}.
\end{equation}
As \(r\) increases, the branches pass successively through a quadratic core with \(n_{{\rm eff},\pm}\simeq2\), a superquadratic wall beginning at the first crossing of the operational threshold \(n_{{\rm eff},\pm}=2.5\), and an outer rollover in which \(n_{{\rm eff},\pm}\) decreases from its wall value. We call this ordered basin structure the \emph{core--wall landscape}.
\end{definition}

To quantify this observation across the six tasks in Figure~\ref{fig:six-boundaries}, we evaluate low-loss checkpoints along the Adam-preconditioned gradient direction
\begin{equation}
\phi_t(s)=L(\theta_t+s\hat d_t),
\qquad
\hat d_t=\frac{D_tg_t}{\lVert D_t^{1/2}g_t\rVert_2}.
\end{equation}
During descent and near the bottoms of temporal loss valleys, these slices exhibit a core--wall landscape. Conventional progressive sharpening tracks the growth of Hessian or preconditioned-Hessian sharpness \cite{cohen2021gradient,cohen2023adaptive}. Our slices reveal a finite-scale counterpart in which the quadratic core also contracts toward the optimizer's update scale, as illustrated in Appendix Figure~\ref{fig:three-stages}. We next show that this geometry arises naturally from confident cross-entropy losses. \textbf{The flat core follows from Hessian collapse as the predicted probabilities approach their hard targets.}

\begin{lemma}[Hessian collapse for confident cross-entropy]\label{lem:hessian-collapse}
On a fixed dataset of \(N\) samples with one-hot targets \(y_i\), let \(L(\theta)=N^{-1}\sum_i\ell_i(\theta)\) be the unregularized softmax cross-entropy, with twice differentiable logits \(z_i(\theta)\) and probabilities \(p_i=\operatorname{softmax}(z_i)\). Along any parameter sequence such that \(p_i\to y_i\) for every sample, assume that \(J_i=\partial z_i/\partial\theta\) and \(\nabla_\theta^2z_{ik}\) remain uniformly bounded. Then \(\|\nabla_\theta^2L\|_2\to0\).
\end{lemma}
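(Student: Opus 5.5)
We plan to use the Gauss--Newton decomposition of the Hessian of each per-sample loss and show that both terms vanish as \(p_i\to y_i\). Write \(\ell_i(\theta)=-\log p_{i,c_i}=\operatorname{LSE}(z_i)-z_{i,c_i}\), where \(c_i\) is the target class. The chain rule gives
\begin{equation}
\nabla_\theta^2\ell_i=J_i^\top\bigl(\operatorname{diag}(p_i)-p_ip_i^\top\bigr)J_i+\sum_k (p_{ik}-y_{ik})\,\nabla_\theta^2 z_{ik},
\end{equation}
because the logit gradient of the softmax cross-entropy is \(p_i-y_i\) and its logit Hessian is the softmax covariance \(\Sigma_i=\operatorname{diag}(p_i)-p_ip_i^\top\). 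We then bound \(\|\nabla_\theta^2L\|_2\le N^{-1}\sum_i\|\nabla_\theta^2\ell_i\|_2\). Since \(N\) is fixed, it suffices to show that each per-sample Hessian tends to zero along the sequence.

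For the Gauss--Newton term, \(\|J_i^\top\Sigma_iJ_i\|_2\le\|J_i\|_2^2\|\Sigma_i\|_2\). By hypothesis \(\|J_i\|_2\le M\) uniformly, so we only need \(\|\Sigma_i\|_2\to0\). We bound the spectral norm by the trace, \(\|\Sigma_i\|_2\le\operatorname{tr}\Sigma_i=1-\|p_i\|_2^2\), using that \(\Sigma_i\) is positive semidefinite. Since \(p_i\to y_i\) and \(\|y_i\|_2=1\), the trace tends to zero. Equivalently, \(1-\|p_i\|^2\le 2(1-p_{i,c_i})\to0\).

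For the residual term, \(\bigl\|\sum_k(p_{ik}-y_{ik})\nabla^2_\theta z_{ik}\bigr\|_2\le K\max_k\|\nabla^2_\theta z_{ik}\|_2\,\|p_i-y_i\|_\infty\). Here \(K\) is the number of classes, which is fixed and finite, and the Hessian bound is uniform by assumption. This term therefore tends to zero as well.

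Summing over the \(N\) samples completes the argument. The only point needing care is to keep all constants uniform along the sequence. The hypotheses are stated uniformly and \(N,K\) are fixed, so no step is a genuine obstacle. The main conceptual step is the trace bound on the softmax covariance, which converts \(p_i\to y_i\) into decay of the curvature. This bound also gives the rate \(O(1-\min_i p_{i,c_i})\), which we expect to be useful later when estimating the width of the quadratic core.
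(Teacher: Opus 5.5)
Your proposal is correct and follows the paper's proof essentially step for step: the same generalized Gauss--Newton decomposition, the same PSD trace bound $\|C_i\|_2\le\operatorname{tr}C_i=1-\|p_i\|_2^2\le 2\varepsilon_i$ with $\varepsilon_i=1-p_{i,c_i}$, and a residual bound driven by $\|p_i-y_i\|$. The only difference is in the residual term, where the paper uses the exact identity $\|p_i-y_i\|_1=2\varepsilon_i$ instead of $K\|p_i-y_i\|_\infty$, giving the bound $2\varepsilon_i\max_k\|\nabla_\theta^2 z_{ik}\|_2$ with no dependence on the class count $K$.
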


\begin{proof}[Proof sketch]
The generalized Gauss--Newton decomposition writes each sample Hessian as \(J_i^\top C_iJ_i+\sum_k(p_{ik}-y_{ik})\nabla_\theta^2z_{ik}\), where \(C_i=\operatorname{diag}(p_i)-p_ip_i^\top\). If \(\varepsilon_i=1-p_{i,c_i}\), then \(\|C_i\|_2\leq2\varepsilon_i\) and \(\|p_i-y_i\|_1=2\varepsilon_i\). The assumed derivative bounds therefore make both terms \(O(\varepsilon_i)\), and the finite average vanishes as every \(\varepsilon_i\to0\). Appendix~\ref{proof:hessian-collapse} gives the full norm bounds.
\end{proof}


Lemma~\ref{lem:hessian-collapse} identifies the origin of the flat core but does not determine the size of the neighborhood in which the quadratic approximation is accurate. We derive this scale from the exponential-sum structure of confident cross-entropy.

\begin{definition}[Confident cross-entropy loss slice]\label{def:confident-ce-slice}
Let \(i\in\{1,\ldots,N\}\) index samples or tokens, let \(y_i\) be the correct class for sample \(i\), and let \(j\neq y_i\) index its incorrect classes. Along the frozen slice \(\theta(s)=\theta_*+s\hat d\), define the correct-versus-incorrect logit margin as \(m_{ij}(s)=z_{i,y_i}(s)-z_{ij}(s)\).
The exact cross-entropy slice is
\begin{equation}
L_{\rm CE}(s)=\frac1N\sum_{i=1}^N\log\left(1+\sum_{j\neq y_i}e^{-m_{ij}(s)}\right).
\end{equation}
In the confident regime, the relevant margins are large and positive, so \(\log(1+u)\simeq u\). If they are locally linearized as \(m_{ij}(s)\simeq m_{ij}-a_{ij}s\), where \(m_{ij}=m_{ij}(0)\) and \(a_{ij}=-\mathrm d m_{ij}(s)/\mathrm ds|_{s=0}\), we call
\begin{equation}
L_{\rm conf}(s)=\frac1N\sum_i\sum_{j\neq y_i}e^{-m_{ij}+a_{ij}s}.
\end{equation}
the \emph{confident cross-entropy loss slice}.
\end{definition}

In confident cross-entropy, the large margins \(m\gg1\) suppress the baseline loss, \(L_{\rm conf}(0)\propto\sum e^{-m}\simeq0\), whereas the directional slopes \(a\) can have much larger magnitudes. At the local loss minimum in Figure~\ref{fig:core-wall-rollover}(a), for example, \(\langle m\rangle_w\pm\operatorname{std}_w(m)=17.08\pm1.29\), whereas \(\langle a\rangle_w\simeq1.33\times10^2\) and \(\sigma_a\simeq2.30\times10^5\). Such large slope magnitudes make factors of the form \(e^{-m+as}\) grow exponentially under very small displacements, \textbf{leaving only an \(O(1/\sigma_a)\) neighborhood in which the quadratic approximation remains accurate.} The following theorem makes this scale precise under a Gaussian model for \(a\).

\begin{theorem}[Quadratic-core scale of confident cross-entropy]\label{thm:quadratic-core-scale}
Approximate the CE-weighted distribution of directional margin slopes by a Gaussian, \(a\overset{w}{\sim}\mathcal N(\mu_a,\sigma_a^2)\). Within this Gaussian model, with the origin chosen at the minimum of the confident surrogate and \(\sigma_a>0\), the slice has a quadratic-core scale \(s_{\rm core}=O(1/\sigma_a)\).
\end{theorem}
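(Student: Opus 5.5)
The plan is to write the confident surrogate as a weighted moment-generating function and read off the core scale from its cumulant expansion. Set \(w_{ij}=e^{-m_{ij}}/\sum_{k,l}e^{-m_{kl}}\) and \(Z=N^{-1}\sum_{i,j}e^{-m_{ij}}\). Definition~\ref{def:confident-ce-slice} then gives
\begin{equation}
L_{\rm conf}(s)=Z\,\mathbb E_w\!\left[e^{as}\right].
\end{equation}
Under the Gaussian model \(a\overset{w}{\sim}\mathcal N(\mu_a,\sigma_a^2)\), the expectation is explicit:
\begin{equation}
L_{\rm conf}(s)=Z\exp\!\left(\mu_a s+\tfrac12\sigma_a^2 s^2\right).
\end{equation}

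The surrogate as written is an exponential of a quadratic, so it has no interior minimum unless its linear term vanishes. The first step is therefore to use the origin convention in the statement. Shifting \(s\) to the minimizer \(s_0=-\mu_a/\sigma_a^2\) rewrites the slice as
\begin{equation}
L_{\rm conf}(s_0+r)=L_*\,e^{\sigma_a^2r^2/2},
\end{equation}
where \(L_*=Z e^{-\mu_a^2/(2\sigma_a^2)}\) is the minimum value. Equivalently, the shift absorbs the factor \(e^{\mu_a s}\) into the weights; the new weights are an exponential tilt of a Gaussian, hence again Gaussian with zero mean and the same variance. In the recentred coordinate the excess loss is
\begin{equation}
\Delta L(r)=L_*\bigl(e^{\sigma_a^2r^2/2}-1\bigr).
\end{equation}

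The second step is to compute the local exponent of Definition~\ref{def:core-wall-landscape}. Writing \(u=\sigma_a^2r^2/2\), a direct differentiation gives
\begin{equation}
n_{\rm eff}(r)=\frac{2u\,e^{u}}{e^{u}-1}.
\end{equation}
This function equals \(2+u+O(u^2)\) as \(u\to0\), increases monotonically in \(u\), and grows without bound. Hence \(n_{\rm eff}\simeq2\) exactly when \(u\ll1\), that is, when \(r\ll 1/\sigma_a\). The operational wall threshold \(n_{\rm eff}=2.5\) is reached at a fixed value \(u_*\) solving \(2u_*e^{u_*}/(e^{u_*}-1)=2.5\), and \(u_*\) is independent of all parameters. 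This gives
\begin{equation}
s_{\rm core}=\frac{\sqrt{2u_*}}{\sigma_a}=O(1/\sigma_a).
\end{equation}
The constant \(u_*\) lies between 0.4 and 0.5 and does not depend on \(Z\) or \(\mu_a\). As a consistency check, the relative error of the quadratic approximation \(L_*(1+u)\) is \(O(u)\), which reproduces the same scale.

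The only genuine subtlety is the minimum convention, since the raw surrogate has no interior minimum when \(\mu_a\neq0\). I would handle it in one of two ways. The first is the recentring above, which is exact in the Gaussian model. The second is to argue that the minimum of the true slice occurs where the weighted mean slope vanishes, which forces \(\mu_a=0\) at the origin. The remaining steps are routine calculus, so I would state the threshold constant \(u_*\) numerically rather than in closed form.
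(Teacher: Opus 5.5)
Your proposal is correct and follows essentially the same route as the paper: the paper also writes the normalized surrogate as the Gaussian moment-generating function, uses stationarity at the origin to force \(\mu_a=0\) (your recentring at \(s_0=-\mu_a/\sigma_a^2\) is the same step), and reads the core scale from \(n_{\rm eff}=2+u+O(u^2)\), with the \(2.5\) threshold at \(\sigma_a|s|\simeq0.964\), which matches your \(\sqrt{2u_*}\) for \(u_*\approx0.464\). One wording fix: an exponential of an upward-opening quadratic always has an interior minimum, namely at \(s_0\); what fails when \(\mu_a\neq0\) is only that this minimum does not sit at the origin.
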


\begin{proof}[Proof sketch]
The normalized confident loss is the moment-generating function of the weighted slope distribution. Gaussianity and stationarity at \(s=0\) give
\begin{equation}
  \frac{\Delta L_{\rm conf}(s)}{L_{\rm conf}(0)}=e^{(\sigma_as)^2/2}-1,
  \qquad
  n_{\rm eff}=\frac{(\sigma_as)^2e^{(\sigma_as)^2/2}}{e^{(\sigma_as)^2/2}-1}.
\end{equation}
Thus \(n_{\rm eff}=2+(\sigma_as)^2/2+O((\sigma_as)^4)\), so departure from the quadratic core occurs at \(|s|=O(1/\sigma_a)\); the threshold \(n_{\rm eff}=2.5\) gives \(|s|\simeq0.964/\sigma_a\). Appendix~\ref{proof:quadratic-core-scale} contains the full derivation.
\end{proof}

To resolve the local shape, we refine \(s_*\) for each frozen slice and plot \(\Delta L=\phi_t(s)-\phi_t(s_*)\) against \(|s-s_*|\) on logarithmic axes. Figure~\ref{fig:core-wall-rollover} compares a measured modular-division slice with a synthetic confident cross-entropy example. We construct the latter as \(L_{\rm CE}(s)=\log(1+\sum_{i=1}^{16}e^{-M_i+a_i s})\), with \(M_i\sim\operatorname{Uniform}[10,20]\) and \(a_i\sim\mathcal N(0,3000^2)\), then shift the slopes so that \(\sum_i e^{-M_i}a_i=0\) and \(s_*=0\). In both cases, the logarithmic slope rises from an approximately quadratic core to a superquadratic wall and then decreases in the outer rollover.

\begin{figure}[t]
\centering
\includegraphics[width=\linewidth,keepaspectratio]{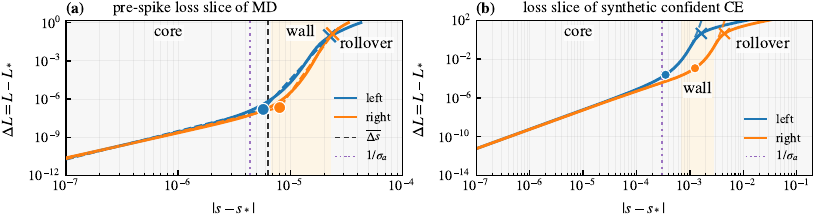}
\caption{Directional loss slices shown as \(\Delta L=L-L_*\) versus \(|s-s_*|\) on log--log axes. (a) The selected modular-division pre-spike point at step \(37{,}210\) for \((\beta_1,\beta_2)=(0.9,0.999)\). Solid curves are the measured left and right branches, dashed curves are piecewise core--wall landscape fits, circles and crosses mark the fitted knots, and the vertical dashed line marks the typical projected update \(\overline{\Delta s}\). Shading identifies the core, wall, and rollover fit intervals. (b) A synthetic confident cross-entropy slice. Solid curves show the exact loss, and dashed curves show its confident approximation. Purple dotted lines mark \(1/\sigma_a\). In both examples, the wall emerges at the exponential-sum scale.}
\label{fig:core-wall-rollover}
\end{figure}

For the measured slice in Figure~\ref{fig:core-wall-rollover}(a), averaging the two branches gives \(n_{\rm eff}\simeq2.18\) in the core and \(n_{\rm eff}\simeq10.90\) along the wall. The core--wall transition occurs at \(O(10^{-6})\), comparable to the typical update scale \(\overline{\Delta s}\simeq6.3\times10^{-6}\), while the rollover begins at \(O(10^{-5})\). Consistently, \(1/\sigma_a\simeq4.35\times10^{-6}\) for the measured slice. The synthetic slice in Figure~\ref{fig:core-wall-rollover}(b) exhibits the same three regimes with \(1/\sigma_a\simeq2.97\times10^{-4}\).

Beyond the core, exponential reweighting favors terms with increasingly extreme directional slopes. If \(\langle a\rangle_s\) denotes their loss-weighted mean at displacement \(s\), then
\begin{equation}
\frac{\mathrm d\log L_{\rm conf}}{\mathrm d\log|s|}=s\langle a\rangle_s.
\end{equation}
The growth of \(|\langle a\rangle_s|\) produces a superquadratic wall that admits a finite-interval power-law approximation, yielding the effective exponent used in the one-dimensional model. At larger displacements, the logarithm in exact cross-entropy changes a dominant contribution from exponential to asymptotically linear growth, reducing the effective exponent and producing the rollover.

Finally, we connect these directional measurements to the \(\beta\)-phase diagrams. We use the fitted wall exponent over the resolved intermediate regime as an effective input to the empirical left-boundary relation from Section~\ref{sec:one-dimensional-toy-model-of-adam-stability},
\begin{equation}
C_L(n_{\rm wall})\simeq\frac{12(n_{\rm wall}-2)}{3n_{\rm wall}-4},
\qquad 1-\beta_2=C_L(n_{\rm wall})(1-\beta_1).
\end{equation}
The non-spiky region below the right boundary of the pure power-law model arises asymptotically as \(s\to0\) while the loss remains superquadratic. In the measured neural-network landscapes, however, this limit enters the quadratic core, cutting off the pure-superquadratic right-boundary mechanism before it can be observed. We therefore compare the measured wall exponent only with the empirical left boundary.

For a controlled comparison, all six model--task settings use the same six reference beta pairs and apply the same core--wall--rollover fitting procedure at the pre-spike points. We first obtain one jointly fitted \(n_{\rm wall}\) for each beta pair and then report the unweighted mean over the six pairs in Table~\ref{tab:wall-exponents}. We evaluate \(C_L\) from this mean exponent. Appendix~\ref{a.4-locating-the-pre-spike-points} describes how the pre-spike points are located, and Appendix~\ref{a.5-fitting-and-aggregating-the-wall-exponent} gives the directional-slice fit and aggregation procedure.

\begin{table}[t]
\caption{Effective wall exponents and the resulting empirical left-boundary coefficients for the six model--task settings.}
\label{tab:wall-exponents}
\centering
\begin{tabular}{@{}lcc@{}}
\toprule
Model--task setting & \(n_{\rm wall}\) & Estimated \(C_L\) \\
\midrule
Modular division & 10.90 & 3.72 \\
Modular addition & 10.58 & 3.71 \\
Tiny Shakespeare & 9.41 & 3.67 \\
CIFAR-10 MLP & 10.48 & 3.71 \\
CIFAR-10 CNN & 6.52 & 3.49 \\
MNIST autoencoder & 10.22 & 3.70 \\
\bottomrule
\end{tabular}
\end{table}

\section{Conclusion and outlook}\label{sec:conclusion}

Taken together, our results make three contributions. First, they establish a near-linear phase boundary between spiky and non-spiky Adam dynamics across diverse models and tasks. Second, our analysis attributes the shift from cubic to linear boundary scaling to superquadratic loss geometry and shows how the first- and second-moment memory timescales set the phase boundaries. Third, we study local loss landscapes at finite scales and show that confident cross-entropy produces a core--wall structure consisting of a narrow quadratic core, a superquadratic wall, and an outer rollover. The effective exponent measured at the optimizer's update scale predicts the empirical boundary coefficient, directly linking finite-scale landscape geometry to the beta phase diagram.

Looking ahead, three questions appear especially promising.
\begin{enumerate}
\item Basin- and valley-like loss landscapes arise in broader underparameterized neural-network settings \cite{bosman2020visualising,ruizgarcia2021tilting}. Can the core--wall exponential-sum mechanism also explain these landscapes?
\item More generally, non-quadratic geometry with an exponent \(n_{\rm eff}\) that changes across direction, scale, and training time is likely to be the norm. Can an optimizer estimate \(n_{\rm eff}\) at its update scale and adapt its learning rate or moment timescales to the evolving phase boundary?
\item What mechanism drives progressive sharpening at finite scales? How do the margins \(m_{ij}\) and their directional derivatives \(a_{ij}=-\mathrm{d}m_{ij}/\mathrm{d}s\) evolve during training?
\end{enumerate}

\section{Related work}\label{sec:related-literature-and-relationship-to-existing-work}

\paragraph{Edge of Stability (EoS).}
Trajectory studies linked learning rates to sharp directions and showed that progressive sharpening reaches the quadratic threshold \(2/\eta\), producing EoS dynamics \cite{xing2018walk,jastrzebski2018relation,jastrzebski2020breakeven,cohen2021gradient}. Adaptive EoS uses the preconditioned Hessian, central-flow theory describes averaged oscillations, and decoupling between \(g_t^2\) and \(v_t\) can trigger Adam spikes \cite{cohen2023adaptive,cohen2025centralflows,bai2026adaptive}. Low-dimensional analyses cover quadratic two-cycles and frozen-EoS restoration \cite{bock2019nonconvergence,fong2026provable}. \citet{bai2026degenerate} analyze the \((\beta_1,\beta_2)\) phase transition on even-degree degenerate polynomials, whereas we study it on the core--wall landscapes observed in real models. Owing to the quadratic core, at fixed \(\beta_1\) the spiky phase in real models occurs at larger \(\beta_2\), in direct contrast to their prediction. Complementary work studies near-zero Hessian bulk, loss slices, and river-valley dynamics \cite{sagun2018hessian,li2018visualizing,wen2024river}.

\paragraph{Loss spikes.}
Loss spikes in large-language-model training may require checkpoint-and-data interventions \cite{chowdhery2023palm}, while unusually large Adam updates can become weakly aligned with descent \cite{molybog2023adam}. Momentum-dependent oscillations, spikes, and divergence have also been observed \cite{ma2022qualitative}. Proposed mechanisms include lower-loss-as-sharper geometry and lagging second moments that amplify preconditioned curvature even on quadratics \cite{li2023lossspike,bai2026adaptive}. Slingshot connects cyclic adaptive-optimizer instability to grokking, although neither implies the other \cite{power2022grokking,thilak2022slingshot}. Numerical Feature Inflation and weight-norm criticality explain spikes in low precision or scale-invariant networks with weight decay \cite{liu2026grokking,li2026weightnorm}. Our one-dimensional examples attribute beta-boundary scaling to competing moment timescales and superquadratic finite-scale geometry.

\paragraph{Core--wall landscapes.}
Flat minima have been associated with generalization \cite{hochreiter1997flat,keskar2017largebatch}, although parameter-space sharpness is not invariant to function-preserving reparameterizations \cite{dinh2017sharp}. Volume flatness, worst-case loss increases, and Hessian sharpness address different questions. Recent LLM studies define basins by preserving alignment or task performance under parameter perturbations \cite{peng2024safety,chen2026basin}. These characterize fine-tuning robustness, whereas our core--wall landscape uses local training loss along a dynamically selected direction to explain spike onset.

\paragraph{Optimization with non-quadratic curvature.}
Beyond the quadratic threshold, prior work studies unstable convergence, multiscale subquadratic landscapes, and cubic self-stabilization \cite{ahn2022unstable,ma2022beyond,damian2023selfstabilization}. Generalized self-concordance bounds finite-displacement departures for logistic loss, while the generalized Gauss--Newton decomposition shows why pointwise curvature need not determine finite-step dynamics \cite{bach2010selfconcordant,schraudolph2002fast}. Neither derives the directional scale \(O(1/\sigma_a)\) of a confident cross-entropy exponential sum. 

\FloatBarrier
\clearpage
\bibliography{references}
\bibliographystyle{plainnat}

\appendix

\section{Proofs}\label{proofs}

\subsection{Proof of Lemma~\ref{lem:adam-local-stability}}\label{proof:adam-local-stability}

\begin{proof}
For gradient descent, the local perturbation vector evolves as
\begin{equation}
  \delta_{t+1}=(I-\eta H_t)\delta_t.
\end{equation}
Along an eigenvector of \(H_t\) with eigenvalue \(\lambda\), this becomes
\begin{equation}
  \delta_{t+1}=(1-\eta\lambda)\delta_t.
\end{equation}
Stability requires \(|1-\eta\lambda|<1\), which gives
\begin{equation}
  \eta\lambda<2
  \qquad\Longrightarrow\qquad
  \lambda<\frac{2}{\eta}.
\end{equation}

With first-moment momentum,
\begin{equation}
  m_t=\beta_1m_{t-1}+(1-\beta_1)g_t,\qquad \theta_{t+1}=\theta_t-\eta m_t.
\end{equation}
Linearizing \(g_t\approx H_t\delta_t\) and eliminating \(m_t\) gives the vector recurrence
\begin{equation}
  \delta_{t+1}=\left[(1+\beta_1)I-\eta(1-\beta_1)H_t\right]\delta_t-\beta_1\delta_{t-1}.
\end{equation}
Along a Hessian eigen-direction, the characteristic equation is
\begin{equation}
  r^2-\alpha r+\beta_1=0,\qquad \alpha=(1+\beta_1)-\eta(1-\beta_1)\lambda.
\end{equation}
The Jury stability condition gives
\begin{equation}
  \eta(1-\beta_1)\lambda<2(1+\beta_1),
\end{equation}
or equivalently
\begin{equation}
  \frac{1-\beta_1}{1+\beta_1}\lambda<\frac{2}{\eta}.
\end{equation}

Adam replaces the raw gradient by a coordinate-wise normalized gradient:
\begin{equation}
  v_t=\beta_2v_{t-1}+(1-\beta_2)g_t^2,\qquad \theta_{t+1}=\theta_t-\eta\frac{\hat m_t}{\sqrt{\hat v_t}+\epsilon}.
\end{equation}
Because \(v_t\) tracks squared gradients, its RMS scale is \(\sqrt{\hat v_t}\), giving the preconditioner
\begin{equation}
  D_t=\operatorname{diag}\left(\frac{1}{\sqrt{\hat v_t}+\epsilon}\right).
\end{equation}
Freezing \(D_t\) and \(H_t\) replaces the local operator \(H_t\) by \(D_tH_t\). In the absence of first-moment momentum, the corresponding perturbation dynamics are
\begin{equation}
  \delta_{t+1}\approx(I-\eta D_tH_t)\delta_t.
\end{equation}
The matrix \(D_tH_t\) is similar to the symmetric matrix \(D_t^{1/2}H_tD_t^{1/2}\), so its eigenvalues are real. Including first-moment momentum and applying the preceding condition to the largest eigenvalue yields the instability criterion
\begin{equation}
  \frac{1-\beta_1}{1+\beta_1}\lambda_{\max}(D_tH_t)>\frac{2}{\eta},
\end{equation}
which is the Adam EoS instability condition.
\end{proof}

\subsection{Proof of Theorem~\ref{thm:right-boundary}}\label{proof:right-boundary}

\begin{lemma}[Second-moment self-decay]\label{lem:second-moment-self-decay}
Let \(g_t=kn|x_t|^{n-2}x_t\), \(v_t=\beta_2v_{t-1}+(1-\beta_2)g_t^2\), and \(v_0>0\). Set \(q=\beta_2^{1/[2(n-2)]}\), \(x_t=q^t y_t\), and \(V_t=\beta_2^{-t}v_t\). If \(\sup_t|y_t|<\infty\), then \(V_t\) converges to a finite limit \(V_\infty>0\), and
\begin{equation}
v_t=V_\infty\beta_2^t[1+O(q^{2t})],
\qquad
\frac{g_t^2}{v_t}=O(q^{2t}),
\qquad
\frac{v_t}{\beta_2v_{t-1}}=1+O(q^{2t}).
\end{equation}
\end{lemma}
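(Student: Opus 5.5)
Write \(V_t=\beta_2^{-t}v_t\) and divide the second-moment recursion by \(\beta_2^t\). This gives the exact telescoping identity
\begin{equation}
V_t=V_{t-1}+(1-\beta_2)\beta_2^{-t}g_t^2 .
\end{equation}
The plan is to show that the increments are summable and decay geometrically. All three claimed asymptotics then follow.

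\textbf{Increment bound.} Substitute \(x_t=q^ty_t\). Then
\begin{equation}
g_t^2=k^2n^2|x_t|^{2(n-1)}=k^2n^2q^{2(n-1)t}|y_t|^{2(n-1)}.
\end{equation}
The choice \(q^{2(n-2)}=\beta_2\) gives \(q^{2(n-1)t}=\beta_2^t\,q^{2t}\). Hence
\begin{equation}
\beta_2^{-t}g_t^2=k^2n^2q^{2t}|y_t|^{2(n-1)}\le K q^{2t},
\qquad K=k^2n^2\bigl(\sup_t|y_t|\bigr)^{2(n-1)}<\infty.
\end{equation}
This uses the hypothesis \(\sup_t|y_t|<\infty\) and \(n>2\), so the exponent \(2(n-1)\) is positive.

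\textbf{Convergence of \(V_t\).} Since \(0<q<1\), the increments are nonnegative and summable. So \(V_t\) is nondecreasing and converges to a finite limit \(V_\infty\). Moreover \(V_\infty\ge V_0=v_0>0\), which gives positivity. The tail satisfies
\begin{equation}
0\le V_\infty-V_t\le(1-\beta_2)K\sum_{s>t}q^{2s}=O(q^{2t}).
\end{equation}
Since \(V_\infty>0\), this can be written \(V_t=V_\infty[1+O(q^{2t})]\). Multiplying by \(\beta_2^t\) gives the first claim, \(v_t=V_\infty\beta_2^t[1+O(q^{2t})]\).

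\textbf{Remaining two claims.} For the gradient ratio,
\begin{equation}
\frac{g_t^2}{v_t}=\frac{\beta_2^{-t}g_t^2}{V_t}\le\frac{Kq^{2t}}{V_0}=O(q^{2t}),
\end{equation}
using \(V_t\ge V_0\). For the one-step ratio, the recursion gives
\begin{equation}
\frac{v_t}{\beta_2v_{t-1}}=\frac{V_t}{V_{t-1}}=1+\frac{(1-\beta_2)\beta_2^{-t}g_t^2}{V_{t-1}},
\end{equation}
and the correction term is \(O(q^{2t})\) by the same bound.

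There is no real obstacle once the scaling is fixed. The only delicate step is verifying the exponent identity \(q^{2(n-1)t}=\beta_2^tq^{2t}\): this is exactly what motivates the choice of \(q\), and it leaves the factor \(q^{2t}\) that makes the increments summable. A secondary point is that the bound is uniform only because of the assumed boundedness of \(y_t\). The lemma does not need to establish that boundedness; it is supplied later by the trajectories being attracted to the rescaled fixed points.
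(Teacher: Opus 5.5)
Your proposal is correct and follows the paper's proof essentially step for step. Both arguments use the same chain: the exact recurrence \(V_t-V_{t-1}=(1-\beta_2)(kn)^2q^{2t}|y_t|^{2(n-1)}\) obtained from \(q^{2(n-2)}=\beta_2\), monotone convergence with \(V_\infty\ge v_0>0\), the geometric tail bound \(V_\infty-V_t=O(q^{2t})\), and the two ratio identities controlled by \(V_t\ge v_0\). One small correction: the hypothesis \(n>2\) matters mainly because it gives \(q<1\), which makes the increments summable, not because it makes \(2(n-1)\) positive.
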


\begin{proof}
The rescaled second moment obeys the exact recurrence
\begin{equation}
V_t-V_{t-1}
=(1-\beta_2)\beta_2^{-t}g_t^2
=(1-\beta_2)(kn)^2q^{2t}|y_t|^{2(n-1)}.
\end{equation}
If \(|y_t|\leq M\), the increments are nonnegative and bounded by a summable geometric sequence. Hence
\begin{equation}
V_\infty
=v_0+(1-\beta_2)(kn)^2
\sum_{s=1}^{\infty}q^{2s}|y_s|^{2(n-1)}
\end{equation}
exists, is finite, and is strictly positive. Moreover,
\begin{equation}
0\leq V_\infty-V_t
\leq
\frac{(1-\beta_2)(kn)^2M^{2(n-1)}}{1-q^2}q^{2(t+1)}.
\end{equation}
Thus \(V_t=V_\infty[1+O(q^{2t})]\), which proves the first claim after multiplying by \(\beta_2^t\). The remaining two claims follow from
\begin{equation}
\frac{g_t^2}{v_t}
=\frac{(kn)^2q^{2t}|y_t|^{2(n-1)}}{V_t},
\qquad
\frac{v_t}{\beta_2v_{t-1}}=\frac{V_t}{V_{t-1}},
\end{equation}
because \(V_t\geq v_0>0\) and \(V_t-V_{t-1}=O(q^{2t})\).
\end{proof}

\begin{proof}[Proof of Theorem~\ref{thm:right-boundary}]
Using the notation of Lemma~\ref{lem:second-moment-self-decay}, a trajectory covered by the theorem satisfies \(\eta_{\rm eff,t}=\eta/\sqrt{v_t}\), \(V_t\to V_\infty>0\), and \(V_\infty-V_t=O(q^{2t})\). Define the stability ratio
\begin{equation}
S_t=\frac{\eta_{\rm eff,t}}{\eta_{\rm crit,t}}=
\frac{(1-\beta_1)kn(n-1)\eta}{2(1+\beta_1)\sqrt{V_t}}|y_t|^{n-2}.
\end{equation}
A trajectory that eventually remains at \(S_t<1\) has no further threshold crossings and hence no further spikes triggered by this mechanism.

Eliminating the first moment from the exact discrete updates gives
\begin{equation}
\begin{aligned}
x_{t+1}={}&\left(1+\beta_1\sqrt{\frac{v_{t-1}}{v_t}}\right)x_t
-\beta_1\sqrt{\frac{v_{t-1}}{v_t}}x_{t-1}\\
&-(1-\beta_1)kn\frac{\eta}{\sqrt{v_t}}|x_t|^{n-2}x_t.
\end{aligned}
\end{equation}
Under this rescaling, the dynamics become the asymptotically autonomous recurrence
\begin{equation}
y_{t+1}=a_t y_t-b_t y_{t-1}-c_t|y_t|^{n-2}y_t,
\end{equation}
where
\begin{equation}
R_t=\sqrt{\frac{V_{t-1}}{V_t}},
\qquad
a_t=\frac{1+\beta_1\beta_2^{-1/2}R_t}{q},
\qquad
b_t=\frac{\beta_1R_t}{q^n},
\qquad
c_t=\frac{(1-\beta_1)kn\eta}{q\sqrt{V_t}}.
\end{equation}
The lemma gives \(R_t=1+O(q^{2t})\), so these coefficients converge at a summable rate to
\begin{equation}
a=\frac{1+\beta_1\beta_2^{-1/2}}q,
\qquad
b=\frac{\beta_1}{q^n}=\beta_1\beta_2^{-n/[2(n-2)]},
\qquad
c=\frac{(1-\beta_1)kn\eta}{q\sqrt{V_\infty}}.
\end{equation}

The two nonzero fixed points exist when \(a-1-b>0\), equivalently \(bq<1\), and satisfy
\begin{equation}
y_*=\pm\left(\frac{a-1-b}{c}\right)^{1/(n-2)}.
\end{equation}
Both have the same stability ratio because \(S\) depends only on \(|y|\):
\begin{equation}
S_*=\frac{(n-1)q(a-1-b)}{2(1+\beta_1)}=
\frac{(n-1)(1-q)(1-bq)}{2(1+\beta_1)}.
\end{equation}
Existence implies \(q>\beta_1^{1/(n-1)}\), while \(q<1\) implies \(bq>\beta_1\). Therefore,
\begin{equation}
S_*<
\frac{(n-1)(1-\beta_1^{1/(n-1)})(1-\beta_1)}{2(1+\beta_1)}<
\frac{(-\ln\beta_1)(1-\beta_1)}{2(1+\beta_1)}<
\frac{\ln2}{6}<1.
\end{equation}

To establish stability, write \(y_t=y_*+u_t\). Expansion around either fixed point yields
\begin{equation}
u_{t+1}=\bigl[(n-1)(1+b)-(n-2)a\bigr]u_t-bu_{t-1}+O(u_t^2),
\end{equation}
with characteristic equation
\begin{equation}
\lambda^2-\bigl[(n-1)(1+b)-(n-2)a\bigr]\lambda+b=0.
\end{equation}
The roots satisfy \(|\lambda_\pm|<1\) if and only if
\begin{equation}
b<1,
\qquad 0<(n-2)(a-1-b)<2(1+b).
\end{equation}
Within the stated parameter range, the second condition follows automatically from \(b<1\). Indeed,
\begin{equation}
q>\frac1{\sqrt2},
\qquad 1-q<\frac{\ln2}{n},
\qquad 0<1-bq<\frac12,
\end{equation}
so
\begin{equation}
0<(n-2)(a-1-b)=
(n-2)\frac{(1-q)(1-bq)}q<
\frac{n-2}{n}\frac{\ln2}{\sqrt2}<2(1+b).
\end{equation}
Conversely, \(|\lambda_\pm|<1\) requires \(b=\lambda_+\lambda_-<1\). At \(b=1\), the fixed points still exist and the same bound gives \(0<(n-2)(a-1-b)<4\), placing the conjugate roots on the unit circle. Immediately beyond this boundary, at least one root has modulus greater than one.

It remains to transfer this stability from the limiting recurrence to the exact time-dependent one. Let \(z_t=(y_t,y_{t-1})\), let \(z_*=(y_*,y_*)\), and let \(A\) be the Jacobian of the limiting two-dimensional map at \(z_*\). When \(b<1\), its spectral radius is less than one, so there is an equivalent norm and a number \(\rho<1\) for which the limiting nonlinear map is locally Lipschitz with constant at most \(\rho\) around \(z_*\). Because \(a_t-a\), \(b_t-b\), and \(c_t-c\) are all \(O(q^{2t})\), the exact recurrence satisfies, after shrinking the neighborhood if necessary,
\begin{equation}
\|z_{t+1}-z_*\|\leq
\rho\|z_t-z_*\|+Cq^{2t}.
\end{equation}
Iteration gives
\begin{equation}
\|z_t-z_*\|\leq
\rho^{t-T}\|z_T-z_*\|
+C\sum_{s=T}^{t-1}\rho^{t-1-s}q^{2s}
\longrightarrow0
\end{equation}
for every trajectory entering this local basin at a sufficiently large time \(T\). Thus the summable perturbation from gradient replenishment does not change the local stability boundary.

Thus, for \(b<1\), trajectories attracted to either fixed point satisfy \(S_t\to S_*<1\) and eventually cease crossing the EoS threshold. The loss of strict linear stability occurs at \(b=1\), giving \(\beta_2=\beta_1^{2(n-2)/n}\).
\end{proof}

\subsection{Proof of Lemma~\ref{lem:hessian-collapse}}\label{proof:hessian-collapse}

\begin{proof}
The generalized Gauss--Newton decomposition \cite{schraudolph2002fast} gives
\begin{equation}
\nabla_\theta^2\ell_i=
\underbrace{J_i^\top C_iJ_i}_{G_i}+
\underbrace{\sum_k(p_{ik}-y_{ik})\nabla_\theta^2z_{ik}}_{R_i},
\qquad C_i=\operatorname{diag}(p_i)-p_ip_i^\top.
\end{equation}
Let \(c_i\) be the correct class and \(\varepsilon_i=1-p_{i,c_i}\). Since \(C_i\succeq0\),
\begin{equation}
\|C_i\|_2\leq\operatorname{tr}C_i=
1-\|p_i\|_2^2\leq1-(1-\varepsilon_i)^2\leq2\varepsilon_i,
\qquad \|p_i-y_i\|_1=2\varepsilon_i.
\end{equation}
Writing \(B_{1,i}=\|J_i\|_2\) and \(B_{2,i}=\max_k\|\nabla_\theta^2z_{ik}\|_2\), submultiplicativity and the triangle inequality yield
\begin{equation}
\|G_i\|_2\leq2\varepsilon_iB_{1,i}^2,
\qquad \|R_i\|_2\leq2\varepsilon_iB_{2,i},
\qquad
\|\nabla_\theta^2L\|_2\leq
\frac{2}{N}\sum_i\varepsilon_i(B_{1,i}^2+B_{2,i})\longrightarrow0.
\end{equation}

The same argument applies to coordinate-wise binary cross-entropy. For a binary target \(y_{ik}\in\{0,1\}\), define the signed margin \(m_{ik}(s)=(2y_{ik}-1)z_{ik}(s)\). Then
\begin{equation}
L_{\rm BCE}(s)=\frac{1}{ND}\sum_{i,k}\log\left(1+e^{-m_{ik}(s)}\right),
\end{equation}
which is a two-class cross-entropy for each sample--coordinate pair. Hence, if every sigmoid prediction approaches its binary target and the corresponding logit derivatives remain bounded, the preceding Hessian-collapse bound holds after replacing \(i\) by the compound index \((i,k)\). Moreover, in the confident regime, locally linearizing \(m_{ik}(s)\simeq m_{ik}-a_{ik}s\) gives \(L_{\rm BCE}(s)\simeq (ND)^{-1}\sum_{i,k}e^{-m_{ik}+a_{ik}s}\), exactly the exponential-sum form in Definition~\ref{def:confident-ce-slice}. If reconstruction remains imperfect so that some pixel probabilities do not approach their targets, however, the vanishing-Hessian conclusion need not hold and a nonzero-curvature core may remain.
\end{proof}

\subsection{Proof of Theorem~\ref{thm:quadratic-core-scale}}\label{proof:quadratic-core-scale}

\begin{proof}
The ratio \(L_{\rm conf}(s)/L_{\rm conf}(0)\) is the moment-generating function of the CE-weighted slope distribution. Replacing this distribution by the Gaussian model gives
\begin{equation}
\frac{L_{\rm conf}(s)}{L_{\rm conf}(0)}=
\mathbb E_w[e^{as}]=
\exp\left(\mu_as+\frac{\sigma_a^2s^2}{2}\right).
\end{equation}
Because \(s=0\) is the local minimum of the slice,
\begin{equation}
0=\frac{L_{\rm conf}'(0)}{L_{\rm conf}(0)}=
\mathbb E_w[a]=\mu_a.
\end{equation}
Therefore, with \(u=\sigma_as\),
\begin{equation}
\frac{\Delta L_{\rm conf}(s)}{L_{\rm conf}(0)}=e^{u^2/2}-1.
\end{equation}
Define the effective log--log exponent by
\begin{equation}
n_{\rm eff}(s)=
\frac{\mathrm d\log\Delta L_{\rm conf}(s)}{\mathrm d\log|s|}.
\end{equation}
The Gaussian model gives
\begin{equation}
n_{\rm eff}(u)=
\frac{u^2e^{u^2/2}}{e^{u^2/2}-1}.
\end{equation}
For \(|u|\ll1\),
\begin{equation}
n_{\rm eff}(u)=2+\frac{u^2}{2}+O(u^4).
\end{equation}
Hence the loss is quadratic for \(|u|\ll1\) and departs from the quadratic form when \(|u|=O(1)\). Since \(u=\sigma_as\), the quadratic-core scale is \(s_{\rm core}=O(1/\sigma_a)\). Defining the core boundary by \(n_{\rm eff}=2.5\) gives \(|u_{\rm core}|\simeq0.964\), or
\begin{equation}
|s_{\rm core}|\simeq\frac{0.964}{\sigma_a}.
\end{equation}
The same derivation applies to confident coordinate-wise BCE because its signed-margin approximation is the same exponential sum over sample--coordinate pairs.
\end{proof}

\section{Full-model experiments}\label{full-model-experiments}

\subsection{Experimental setup for the six models}\label{experimental-setup-for-the-six-models}

All six beta-plane scans use a fixed random seed of 42 and AdamW. The beta values are varied across the scan and are therefore omitted from the setup below. The training horizon is adapted for each task: each run is inspected at successive stages, and unresolved points are resumed at a longer horizon when necessary. The maximum horizon reported here is the largest extension used or permitted for the corresponding final panel, rather than a fixed number of steps applied to every beta setting. Parameter counts refer to trainable model parameters and exclude optimizer state.

\subsubsection{Modular division}\label{a.1.1-modular-division}

\begin{experimentsetup}
\noindent\textbf{Model.} The modular-division model is a one-layer causal Transformer with vocabulary size 54, model dimension \(d_{\rm model}=128\), four attention heads, head dimension \(d_{\rm head}=32\), and an MLP dimension of 512. The sequence length is three, corresponding to the prompt \([x,y,=]\). The model uses ReLU activations and no LayerNorm. The input embedding, positional embedding, attention, MLP, and output unembedding together contain 211,456 trainable parameters.

\noindent\textbf{Dataset.} The task is \(x y^{-1}\bmod 53\), with \(x\in\{0,\ldots,52\}\) and \(y\in\{1,\ldots,52\}\). The complete dataset contains 2,756 examples, which are randomly split into 1,378 training and 1,378 test examples using a 50\% training fraction.

\noindent\textbf{Optimizer.} Training uses cross-entropy loss, AdamW with learning rate \(10^{-3}\), weight decay 0.5, and Adam epsilon \(10^{-8}\). The training loader uses batch size 512, and the learning rate is linearly warmed up during the first 100 optimizer steps.

\noindent\textbf{Training steps.} The final scan starts from 8,000 optimizer steps and adaptively extends unresolved boundary candidates, with a maximum horizon of 30,000 steps.
\end{experimentsetup}

\subsubsection{Modular addition}\label{a.1.2-modular-addition}

\begin{experimentsetup}
\noindent\textbf{Model.} The modular-addition model uses the same one-layer Transformer architecture as modular division: vocabulary size 54, \(d_{\rm model}=128\), four heads with \(d_{\rm head}=32\), MLP dimension 512, sequence length three, ReLU activations, no LayerNorm, and 211,456 trainable parameters.

\noindent\textbf{Dataset.} The task is \((x+y)\bmod 53\), with \(x,y\in\{0,\ldots,52\}\). The complete dataset contains 2,809 examples, randomly split into 1,404 training and 1,405 test examples using a 50\% training fraction.

\noindent\textbf{Optimizer.} Training uses cross-entropy loss, AdamW with learning rate \(10^{-3}\), weight decay 0.5, and Adam epsilon \(10^{-8}\). The batch size is 512, with the same linear 100-step learning-rate warmup as modular division.

\noindent\textbf{Training steps.} The final scan starts from 8,000 optimizer steps and adaptively extends unresolved boundary candidates up to a maximum of 30,000 steps.
\end{experimentsetup}

\subsubsection{Tiny Shakespeare}\label{a.1.3-tiny-shakespeare}

\begin{experimentsetup}
\noindent\textbf{Model.} The Tiny Shakespeare model is a compact causal character Transformer. It has a 65-character vocabulary, model dimension 48, four attention heads, two Transformer blocks, and an MLP dimension of 192 in each block. The character and positional embeddings are 48-dimensional, the positional context length is 64, and the output classifier has no bias. The model uses GELU activations, zero attention dropout, and no LayerNorm. It has 65,472 trainable parameters.

\noindent\textbf{Dataset.} The data are drawn from the first 90\% of \texttt{data/tinyshakespeare/input.txt}, giving a training corpus of 1,003,854 characters. Each run uses the same fixed random set of 448 character windows, each with context length 64. Only the final eight target positions in each window contribute to the loss, while the preceding 56 target positions are masked. The loss is cross-entropy evaluated in float64.

\noindent\textbf{Optimizer.} Training uses full-batch AdamW with learning rate \(10^{-3}\), weight decay 0.1, and Adam epsilon \(10^{-8}\), with no learning-rate scheduler.

\noindent\textbf{Training steps.} The base trajectory is 12,000 optimizer steps. Unresolved points are adaptively extended through 30,000 and 50,000 steps, with selected points extended to a maximum of 100,000 steps.
\end{experimentsetup}

\subsubsection{CIFAR-10 MLP}\label{a.1.4-cifar-10-mlp}

\begin{experimentsetup}
\noindent\textbf{Model.} The CIFAR-10 MLP receives a \(3\times32\times32\) image, flattened to 3,072 input features. It contains five hidden linear layers of width 512, with a ReLU after each hidden layer, followed by a linear 512-to-10 classifier. The model has 2,629,130 trainable parameters and does not use dropout or normalization layers.

\noindent\textbf{Dataset.} For each run, 200 images are randomly selected from the CIFAR-10 training set and 200 images from the CIFAR-10 test set, using seed 42. The images are converted with \texttt{ToTensor()} only, with no data augmentation or additional normalization.

\noindent\textbf{Optimizer.} Optimization uses the 200 training images in full batch with cross-entropy loss and AdamW, using learning rate \(10^{-3}\), weight decay 0.5, and Adam epsilon \(10^{-8}\).

\noindent\textbf{Training steps.} The final scan starts from 4,000 optimizer updates per run and adaptively extends unresolved points to a maximum of 16,000 updates.
\end{experimentsetup}

\subsubsection{CIFAR-10 CNN}\label{a.1.5-cifar-10-cnn}

\begin{experimentsetup}
\noindent\textbf{Model.} The CIFAR-10 CNN is a VGG11-style convolutional network adapted to \(32\times32\) inputs. Its eight convolutional layers have channel widths \(3\to64\to128\to256\to256\to512\to512\to512\to512\), with ReLU activations and five \(2\times2\) max-pooling operations. The classifier maps the final 512 features through two 512-unit linear layers to 10 classes. Each of the two intermediate classifier layers is followed by ReLU and dropout with probability 0.5. The model has 9,750,922 trainable parameters.

\noindent\textbf{Dataset.} The data protocol is the same as for the CIFAR-10 MLP: 200 randomly selected CIFAR-10 training images and 200 randomly selected test images, \texttt{ToTensor()} only, and no augmentation or additional normalization.

\noindent\textbf{Optimizer.} Optimization is full-batch cross-entropy training with AdamW, learning rate \(10^{-3}\), weight decay 0.5, and Adam epsilon \(10^{-8}\).

\noindent\textbf{Training steps.} The base scan uses 8,000 optimizer updates per run. Unresolved points are adaptively extended to 16,000 or 30,000 updates, with a maximum horizon of 30,000 updates.
\end{experimentsetup}

\subsubsection{MNIST autoencoder}\label{a.1.6-mnist-autoencoder}

\begin{experimentsetup}
\noindent\textbf{Model.} The MNIST model is a compact binary convolutional autoencoder. The encoder consists of a \(1\to8\) convolution and an \(8\to16\) convolution, both with kernel size \(3\), stride 2, and ReLU activations, reducing a \(28\times28\) image to a \(16\times7\times7\) representation. A linear layer maps this representation to a 32-dimensional latent vector, and a second linear layer maps it back to \(16\times7\times7\). The decoder uses a \(16\to8\) transposed convolution followed by ReLU and an \(8\to1\) transposed convolution that produces the output logits. Both transposed convolutions use kernel size 4 and stride 2. The autoencoder has 54,425 trainable parameters.

\noindent\textbf{Dataset.} Each run uses 64 randomly selected images from the MNIST training split, with seed 42. The grayscale inputs are binarized at threshold 0.5, and the reconstruction target is the binarized input itself.

\noindent\textbf{Optimizer.} Training uses binary cross-entropy with logits and full-batch AdamW with learning rate \(2\times10^{-3}\), weight decay 0.2, and Adam epsilon \(10^{-12}\). No learning-rate scheduler is used.

\noindent\textbf{Training steps.} The initial phase scan checks successive horizons, and unresolved points are adaptively resumed through a maximum of 200,000 optimizer steps.
\end{experimentsetup}

\subsection{Period-detection algorithm}\label{a.3-period-detection-algorithm}

We estimate the macroscopic period of loss oscillations from loss traces sampled once per optimizer step. We use the training loss for modular arithmetic, Tiny Shakespeare, and MNIST reconstruction, and the test loss for the two CIFAR-10 models. For a trajectory of \(S\) steps, we discard the first \(20\%\) to reduce the influence of the initial transient and analyze the remaining segment. Segments containing fewer than 100 samples are not assigned a period.

To suppress rapid fluctuations while accommodating the large dynamic range of the loss, we first transform the signal as \(u_t=\log_{10}(L_t+10^{-12})\) and smooth it with a third-order Butterworth low-pass filter applied forward and backward. The cutoff frequency is \(0.05\) cycles per step for all models. We then restore the linear loss scale, \(\widetilde L_t=10^{\widetilde u_t}\), and subtract its mean. We estimate the power spectrum \(P(f)\) of this signal using Welch's method.

We identify local spectral peaks and retain only those satisfying both

\begin{equation}
\operatorname{prominence}(f_k)\geq0.02\max_f P(f),
\qquad f_k>\frac{1}{T_{\max}},
\qquad T_{\max}=S.
\end{equation}

Here, prominence measures how strongly a spectral peak stands out from its surrounding spectral background, and \(T_{\max}\) is the length of the complete trajectory before transient removal. Among the accepted peaks, we select the one with the largest spectral power and define the detected period as

\begin{equation}
f_* = \underset{f_k\in\mathcal C}{\arg\max}\,P(f_k),
\qquad T=\frac{1}{f_*},
\end{equation}

where \(\mathcal C\) denotes the set of accepted peaks. The resulting \(T\) is measured in optimizer steps. If no peak satisfies both criteria, the trajectory is marked as having no detected period within the observation window. Unresolved scan points are adaptively extended as described in Appendix~\ref{experimental-setup-for-the-six-models}, and the period is re-estimated from the longer trace. For all models except the CIFAR-10 MLP, we additionally require the minimum training loss in the final \(10\%\) of the complete trajectory to be at most \(0.05\). We exempt the CIFAR-10 MLP because dead ReLU units can prevent its training loss from reaching this threshold even when the trajectory remains relevant to the phase-boundary analysis.

\section{One-dimensional toy-model controls}\label{one-dimensional-toy-model-controls}

\subsection{Sensitivity of the Adam boundary for \texorpdfstring{\(L(x)=kx^2/2\)}{the quadratic loss}}\label{sensitivity-of-adam-on-lxkx22}

To test the sensitivity of the quadratic toy-model boundary to non-beta Adam parameters, we use a common grid spanning \(\delta=1-\beta_1\in[10^{-2},10^{-1}]\) and \(\gamma=1-\beta_2\in[5\times10^{-4},10^{-1}]\). Each control changes one of \(k\), \(\eta\), or \(\epsilon\) relative to the baseline in Figure~\ref{fig:quadratic-controls}(a), while holding the other two fixed. As in the main quadratic experiment, each beta pair is trained for 250,000 updates, the first 125,000 updates are discarded, and a period is assigned when at least three upward crossings of the local stability threshold are detected. We fit the resulting boundary to \(\gamma=C\delta^p\). Figure~\ref{fig:quadratic-controls} shows the phase diagrams, and Table~\ref{tab:quadratic-controls} reports the fits.

\begin{figure}[h]
\centering
\includegraphics[width=\linewidth,keepaspectratio]{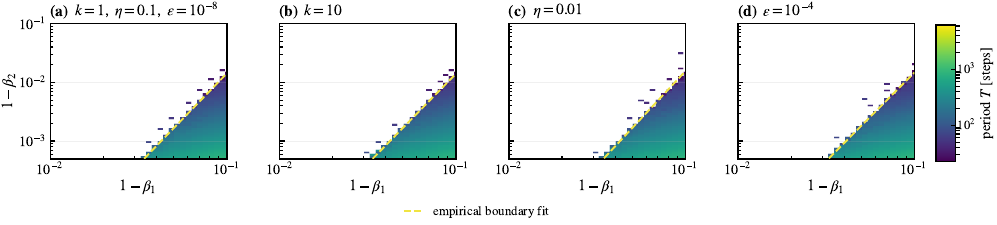}
\caption{Sensitivity of the quadratic-loss boundary to \(k\), \(\eta\), and \(\epsilon\). The phase diagrams show \(L(x)=kx^2/2\) in the \((1-\beta_1,1-\beta_2)\) plane, with dashed fits \(\gamma=C\delta^p\).}
\label{fig:quadratic-controls}
\end{figure}

\begin{table}[h]
\caption{Power-law fits for the quadratic-loss parameter controls in Figure~\ref{fig:quadratic-controls}.}
\label{tab:quadratic-controls}
\centering
\begin{tabular}{@{}llrrr@{}}
\toprule
Panel & Parameters & Valid points & \(p\) & \(C\) \\
\midrule
(a) & \(k=1\), \(\eta=0.1\), \(\epsilon=10^{-8}\) & 351 & 3.095 & 17.70 \\
(b) & \(k=10\), \(\eta=0.1\), \(\epsilon=10^{-8}\) & 353 & 3.064 & 16.30 \\
(c) & \(k=1\), \(\eta=0.01\), \(\epsilon=10^{-8}\) & 351 & 3.210 & 25.00 \\
(d) & \(k=1\), \(\eta=0.1\), \(\epsilon=10^{-4}\) & 365 & 3.009 & 15.02 \\
\bottomrule
\end{tabular}
\end{table}

Changing \(k\) by one order of magnitude leaves the fitted boundary nearly unchanged, and changing \(\eta\) preserves the approximately cubic scaling while producing only a moderate shift in its prefactor. The robustness to \(\epsilon\) is conditional: \(\epsilon\) must remain small compared with \(\sqrt{\hat v_t}\) over the part of the trajectory that determines the boundary. If \(\epsilon\) is too large, it dominates the denominator \(\sqrt{\hat v_t}+\epsilon\) and suppresses the late-time decay of the effective second-moment scale, so the expected boundary scaling is no longer obtained.

\FloatBarrier

\subsection{Effect of weight decay in the one-dimensional toy model}\label{effect-of-weight-decay-on-one-dimensional-toy-model}

Weight decay contributes the parameter update \(\Delta\theta_{\rm wd}=-\eta\lambda_{\rm wd}\theta\). In a neural network, the parameter origin \(\theta=0\) and a local minimum of the task loss are generally distinct. Moreover, the weight-decay update and the Adam-preconditioned gradient direction tend to have small overlaps in high dimensions~\cite{cai2013angles}. At the modular-division pre-spike point used in Figures~\ref{fig:overview}(c) and~\ref{fig:core-wall-rollover}(a), their signed Euclidean projection is
\begin{equation}
\left\langle \Delta\theta_{\rm wd},\frac{\hat d_t}{\lVert\hat d_t\rVert_2}\right\rangle=+9.450\times10^{-3}.
\end{equation}

This geometry cannot be represented by the centered one-dimensional loss \(L(x)=k|x|^n/n\): in one dimension, weight decay and the loss gradient can only be parallel or antiparallel, and decay toward the origin also points toward the loss minimum at \(x=0\). We therefore shift the minimum and use
\begin{equation}
L(x)=\frac{k}{n}|x-x_*|^n,
\qquad x_*=1.
\end{equation}

We choose the small value \(\lambda_{\rm wd}=10^{-3}\) so that the projected decay term perturbs rather than dominates the Adam dynamics. We set \(x_0=2\), preserving the unit initial displacement \(x_0-x_*=1\), and otherwise retain the main toy-model settings: \(k=1\), \(\eta=0.1\), \(\epsilon=10^{-30}\), 250,000 updates, and a 125,000-update burn-in.

\begin{figure}[h]
\centering
\includegraphics[width=\linewidth,keepaspectratio]{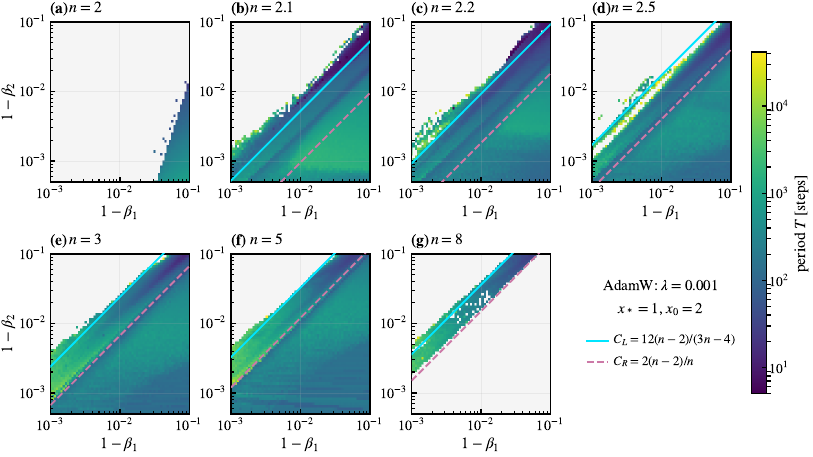}
\caption{Effect of weak decoupled weight decay on the shifted one-dimensional losses \(L(x)=|x-1|^n/n\). AdamW uses \(\eta=0.1\), \(\lambda_{\rm wd}=10^{-3}\), \(\epsilon=10^{-30}\), and \(x_0=2\). Each beta pair is run for 250,000 updates, with the first 125,000 discarded. Color indicates the period estimated from repeated upward crossings of the local stability threshold. The solid cyan and dashed pink lines show \(\gamma=12(n-2)\delta/(3n-4)\) and \(\gamma=2(n-2)\delta/n\), respectively. Relative to Figure~\ref{fig:toy-boundaries}, the left boundary shifts slightly upward, periodic trajectories extend below the reference right boundary, and the right edge of the wedge is unresolved.}
\label{fig:toy-boundaries-weight-decay}
\end{figure}

The weak shifted decay therefore preserves the approximately unit-slope left boundary while modestly enlarging the periodic region on that side. However, the lower/right boundary that closes the no-decay wedge disappears over the scanned range. Thus, the near-linear left-boundary scaling is robust to this weak projected weight-decay force, whereas the two-sided wedge is not.

\FloatBarrier

\section{Finite-scale landscape analysis}\label{finite-scale-landscape-analysis}

\subsection{Locating the pre-spike points and progressive sharpening at finite scales}\label{a.4-locating-the-pre-spike-points}

We define a temporal valley as a low-loss segment between macroscopic high-loss excursions. After discarding the initial training transient, the loss trace is partitioned into successive valleys using the task's loss scale: a valley must enter the low-loss regime and must subsequently be closed by a new high-loss excursion. The last valley in a finite trace is marked as right-censored if no closing excursion has yet been observed and is not used in the exponent fit.

Within each completed valley \(V_j\), the pre-spike point is the single checkpoint
\begin{equation}
t_{{\rm pre},j}=\operatorname*{arg\,min}_{t\in V_j} L_t.
\end{equation}
Thus each completed valley contributes at most one event, preventing long valleys or densely saved portions of a trace from receiving disproportionate weight. The pre-spike point is selected solely from the temporal loss trace. No directional-slope or landscape-shape criterion is used to move the anchor to a nearby checkpoint. When the exact pre-spike point was not retained as a checkpoint during the original run, deterministic replay is used to reconstruct the model and optimizer state at \(t_{{\rm pre},j}\).

Across the examined models and \(\beta\) settings, the selected points occur immediately before a spike. Figure~\ref{fig:loss-traces} illustrates the selection for three modular-division trajectories.

\begin{figure}[h]
\centering
\includegraphics[width=\linewidth,keepaspectratio]{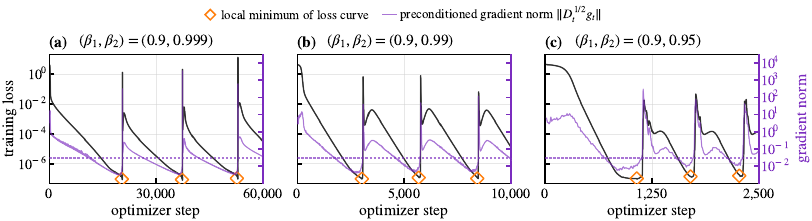}
\caption{Training-loss traces for the modular-division Transformer at \((\beta_1,\beta_2)=(0.9,0.999)\), \((0.9,0.99)\), and \((0.9,0.95)\), from left to right. Diamonds mark the local minima of the loss curve, which are used directly as the pre-spike points.}
\label{fig:loss-traces}
\end{figure}

The pre-spike point \(t_{\rm pre}\) and the spatial minimum \(s_*\) play different roles. The former selects the pre-spike model state. After freezing that state and the Adam preconditioner, we define the full-model directional slice
\begin{equation}
\phi_t(s)=L(\theta_t+s\hat d_t),
\qquad
\hat d_t=\frac{D_t g_t}{\lVert D_t^{1/2} g_t\rVert_2},
\end{equation}
and refine its one-dimensional minimum \(s_*\). The direction combines the current gradient with the second-moment preconditioner and captures the geometry relevant to the Adam update. The exponent fit is centered at \(s_*\), so the procedure does not assume that the pre-spike point itself lies exactly at a stationary point of the frozen directional slice. To compare the slice with the dynamics, we freeze \(\hat d_t\) at step \(t\). For the ten updates ending at \(t\), let \(\Delta\theta_j=\theta_j-\theta_{j-1}\). We calculate their projected slice coordinates and the typical update scale as
\begin{equation}
s_j^{\rm proj}=\frac{\Delta\theta_j^\top\hat d_t}{\hat d_t^\top\hat d_t},
\qquad
\overline{\Delta s}=\underset{j=t-9,\ldots,t}{\operatorname{median}}\left|s_j^{\rm proj}\right|.
\end{equation}
The denominator expresses each projected update in the same \(s\) coordinate used by \(\phi_t(s)\), even though \(\hat d_t\) is not Euclidean-normalized.

We classify each recentered slice according to Definition~\ref{def:core-wall-landscape}. We use the modular-division Transformer at \((\beta_1,\beta_2)=(0.9,0.999)\) as a representative example.

\begin{figure}[h]
\centering
\includegraphics[width=\linewidth,keepaspectratio]{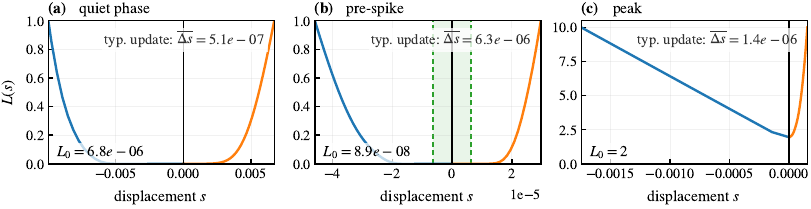}
\caption{Evolution of the directional loss slice through a modular-division spike at \((\beta_1,\beta_2)=(0.9,0.999)\). The three panels show a quiet point (step \(28{,}600\), \(L_0\simeq6.8\times10^{-6}\)), the pre-spike point (step \(37{,}210\), \(L_0\simeq8.9\times10^{-8}\)), and the spike peak (step \(37{,}220\), \(L_0\simeq2.0\)), respectively. The green band and dashed edges indicate the magnitude of a typical Adam-preconditioned update: \(\overline{\Delta s}\simeq5.1\times10^{-7}\), \(6.3\times10^{-6}\), and \(1.4\times10^{-6}\) in the three panels. As the system approaches the pre-spike state, the flat core contracts to the update scale. \textbf{This phenomenon can be viewed as progressive sharpening at finite scales.} At the peak, the directional wall is encountered and the slice becomes strongly nonquadratic.}
\label{fig:three-stages}
\end{figure}

We define the core--wall transition as the first displacement at which
\begin{equation}
n_{\rm eff}(s)=\frac{\mathrm d\log\Delta L}{\mathrm d\log|s-s_*|}
\end{equation}
reaches \(2.5\). During the quiet phase in Figure~\ref{fig:three-stages}, the left and right transition distances are approximately \(7.0\times10^{-4}\) and \(8.0\times10^{-4}\), more than three orders of magnitude larger than \(\overline{\Delta s}\simeq5.1\times10^{-7}\). At the pre-spike point, these distances contract to \(2.16\times10^{-6}\) and \(3.55\times10^{-6}\), while \(\overline{\Delta s}\) grows to \(6.3\times10^{-6}\). \textbf{A typical update therefore crosses the quadratic core and reaches the superquadratic wall.} At the spike peak, the reference loss is approximately \(2.0\), and the strongly asymmetric slice no longer exhibits a core--wall landscape.

\FloatBarrier

\subsection{Fitting and aggregating the wall exponent}\label{a.5-fitting-and-aggregating-the-wall-exponent}

Every model--task setting is evaluated at the same six reference beta pairs,
\begin{equation}
(\beta_1,\beta_2)\in\{(0.9,0.9),(0.9,0.95),(0.9,0.99),(0.9,0.999),(0.99,0.99),(0.99,0.999)\}.
\end{equation}
For each usable pre-spike slice, define the displacement and excess loss on its left and right branches by
\begin{equation}
r=|s-s_*|,
\qquad
\Delta L=\phi_t(s)-\phi_t(s_*),
\end{equation}
and transform the positive, resolved samples to \(u=\log r\) and \(y=\log\Delta L\). The two branches are retained separately so that basin asymmetry is absorbed by branch-specific parameters rather than by averaging the profiles before fitting.

Each event-side branch is first fit with a continuous three-segment line,
\begin{equation}
y(u)=c+n_{\rm core}(u-b_1)+(n_{\rm wall}-n_{\rm core})[u-b_1]_+ +(n_{\rm roll}-n_{\rm wall})[u-b_2]_+,
\qquad [z]_+=\max(z,0).
\end{equation}
The breakpoints \(b_1<b_2\) separate the quadratic core, steep wall, and outer rollover. Candidate breakpoints are enumerated, with at least five samples required in each regime, and the pair with the smallest total squared residual in log space is selected subject to
\begin{equation}
n_{\rm wall}>n_{\rm core},
\qquad
n_{\rm wall}>n_{\rm roll}.
\end{equation}
These inequalities operationally identify the wall as the middle steepening region. They also prevent the flatter outer rollover from being folded into the phase-relevant wall exponent.

After determining \(b_1\) and \(b_2\) separately for every event and side, all branches within one beta setting are refit jointly using only samples through \(b_2\):
\begin{equation}
y_{e,\pm}(u)=c_{e,\pm}+n_{\rm core}\min(u-b_{1,e,\pm},0)+n_{\rm wall}\max(u-b_{1,e,\pm},0),\qquad u\leq b_{2,e,\pm}.
\end{equation}
The intercept and breakpoints remain specific to each event-side branch, while \(n_{\rm core}\) and \(n_{\rm wall}\) are shared. This produces one beta-level exponent \(n_{{\rm wall},k}\) from all usable pre-spike events at the \(k\)-th beta pair. Uncertainty is estimated with 2,000 event-level bootstrap resamples, with the left and right branches of each event always resampled together.

Finally, the exponent shown in the main-text table is the unweighted arithmetic mean of the six beta-level estimates,
\begin{equation}
n_{\rm wall}^{\rm(task)}=\frac{1}{6}\sum_{k=1}^{6}n_{{\rm wall},k}.
\end{equation}
Each beta pair therefore receives the same weight even when the number of usable completed valleys differs. The boundary coefficient is computed only after this averaging step,
\begin{equation}
C_L=\frac{12\bigl(n_{\rm wall}^{\rm(task)}-2\bigr)}{3n_{\rm wall}^{\rm(task)}-4},
\end{equation}
rather than by averaging six separately transformed coefficients.

\end{document}